\newtheorem{theorem}{Theorem}
\newtheorem{lemma}{Lemma}
\newtheorem{definition}{Definition}
\newcommand{\cmdp}{\mathcal{C}}
\newcommand{\states}{\mathcal{S}}
\newcommand{\act}{\mathcal{A}}
\newcommand{\trans}{\delta}
\newcommand{\reward}{r}
\newcommand{\distr}{\mathcal{D}}
\newcommand{\mixdistr}{mix\_distr}
\newcommand{\detdistr}{det\_distr}
\newcommand{\reals}{\mathbb{R}}
\newcommand{\discount}{\gamma}
\newcommand{\expect}{\mathbb{E}}
\newcommand{\hist}{h}
\newcommand{\E}{\mathbb{E}}
\newcommand{\thr}{\Delta}
\newcommand{\Rset}{\mathbb{R}}
\newcommand{\Nset}{\mathbb{N}}
\newcommand{\probm}{\mathbb{P}}
\newcommand{\eps}{\varepsilon}
\newcommand{\acost}{Cost}
\newcommand{\rew}{\reward}
\newcommand{\node}{n}
\newcommand{\payoff}{\mathit{Payoff}}
\newcommand{\explconst}{C}
\newcommand{\hor}{T}
\newcommand{\pen}{c}
\newcommand{\sinit}{s_0}
\newcommand{\tstep}{i}
\newcommand{\paretoest}{\mathcal{P}}
\newcommand{\paretotrue}{{P}}
\newcommand{\FUNCTION}[2]{\STATE \textbf{Function} #1(#2) \begin{ALC@g} }
\newcommand{\ENDFUNCTION}{\end{ALC@g} \STATE \textbf{EndFunction}}
\newcommand{\uct}{\mathit{expl}}
\newcommand{\conv}{\mathit{conv}}
\newcommand{\prune}{\mathit{prune}}
\newcommand{\thract}{\thr_{\mathit{act}}}
\newcommand{\imcost}{\bar{c}}
\DeclareMathOperator*{\argmax}{arg\,max}
\DeclareMathOperator*{\argmin}{arg\,min}
\title{Threshold UCT: Cost-Constrained Monte Carlo Tree Search with Pareto Curves}
\author {
Martin Kurečka\textsuperscript{\rm 1},
Václav Nevyhoštěný\textsuperscript{\rm 1},
Petr Novotný\textsuperscript{\rm 1},
Vít Unčovský\textsuperscript{\rm 1}\\
}
\begin{document}

\maketitle

\begin{abstract}
Constrained Markov decision processes (CMDPs), in which the agent optimizes expected payoffs while keeping the expected cost below a given threshold, are the leading framework for safe sequential decision making under stochastic uncertainty. Among algorithms for planning and learning in CMDPs, methods based on Monte Carlo tree search (MCTS) have particular importance due to their efficiency and extendibility to more complex frameworks (such as partially observable settings and games). However, current MCTS-based methods for CMDPs either struggle with finding safe (i.e., constraint-satisfying) policies, or are too conservative and do not find valuable policies.
We introduce Threshold UCT (T\nobreakdash-UCT), an online MCTS-based algorithm for CMDP planning. Unlike previous MCTS-based CMDP planners, T\nobreakdash-UCT explicitly estimates Pareto curves of cost-utility trade-offs throughout the search tree, using these together with a novel action selection and threshold update rules to seek safe and valuable policies. Our experiments demonstrate that our approach significantly outperforms state-of-the-art methods from the literature.
\end{abstract}

\section{Introduction}

\paragraph{Safe Decision Making and MCTS} Monte-Carlo tree search (MCTS) has emerged as the de-facto method for solving large sequential decision making problems under uncertainty~\cite{MCTSsurvey}. It combines the scalability of sampling-based methods with the robustness of heuristic tree search, the latter feature making it easily extendable to settings with partial observability~\cite{SV:2010:POMCP}, multiple agents~\cite{SilEtAl:18:alphazero-science}, or settings with history-dependent optimal decisions~\cite{ChaENR:2018:RAMCP}. While MCTS-based methods demonstrated remarkable efficiency in optimizing the agent's performance across diverse domains, the deployment of autonomous agents in real-world domains necessitates balancing the agent performance with the \emph{safety} of their behavior. In AI planning and reinforcement learning, the standard way of modeling safety issues is via the \emph{constrained decision-making} framework. Here, apart from the usual reward signals, the agents are also collecting \emph{penalties} (or \emph{costs}), and the objective is to maximize the expected accumulated reward under the constraint that the expected accumulated cost is below a given threshold \( \thr \). Compared with ad-hoc reward shaping, the constrained approach provides explicit and domain-independent way of controlling agent safety. Hence, safe and efficient probabilistic planning (and indeed, also model-free reinforcement learning, where algorithms such as MuZero~\cite{muzero} are built on top of efficient MCTS planners) necessitates the development of stable and sample-efficient cost-constrained MCTS algorithms.

\paragraph{Key Components of Constrained MCTS} An efficient constrained MCTS-based algorithm must be able to identify \emph{safe} and \emph{valuable} policies. 

Finding safe policies (i.e., those that do not exceed the cost threshold) requires identifying ``dangerous'' (in terms of future cost) decisions and keeping track of \emph{cost risk} accumulated in stochastic decisions: a \( 50/50 \) gamble which incurs cost \( C  \) if lost contributes at least \( C/2 \) towards the expected cost of the policy irrespective of the gamble's outcome.

An agent which never moves might be safe but never does anything useful. To identify reward-valuable policies among the safe ones, the algorithm must not be constrained beyond the requirements given by the threshold \( \thr \) and hence it must be able to reason about the \emph{trade-off} between rewards and costs during both tree search and actual action selection. 

\paragraph{Limitations of previous approaches} Two prominent examples of constrained MCTS-based algorithms are CC-POMCP~\cite{ccpomcp} and RAMCP~\cite{ChaENR:2018:RAMCP}. While these algorithms represented significant steps towards practical constrained decision making, they exhibit fundamental limitations in identifying both safe and valuable policies. 

\emph{Safety limitations:} A usual way of tracking the cost risk is updating the current threshold \( \thr \) appropriately after each decision. As we discuss in Section~\ref{sec:tuct}, both CC-POMCP and RAMCP perform this update in an unsound manner and might thus produce policies violating the cost constraint even if a safe policy exists within the explored part of the tree. 

\emph{Value limitations:} Both CC-POMCP and RAMCP compute randomized policies, which are necessary for optimality in constrained decision-making scenarios~\cite{Altman:1999:CMDP-book}. However, their reasoning about the reward-cost payoff is incomplete. RAMCP does not use the cost information during the tree search phase (which mimics the standard UCT~\cite{KS06}) at all: costs are only considered in the actual action selection phase, when a linear program (LP) encoding the constraints is solved over the constructed tree. Although the LP drives the agent to satisfy the constraints, the data used to construct the LP are sampled using a cost-agnostic search procedure which might lead to sub-optimal decisions. CC-POMCP, on the other hand, is a Lagrangian dual method in which the Lagrangian multiplier \( \lambda \) represents a concrete reward-cost tradeoff to be used in both tree search and action selection. The key limitation of the Lagrangian approach is the scalar nature of the tradeoff estimate \( \lambda \): unless \( \lambda \) quickly converges to the optimal tradeoff, the algorithm will collect data according to either overly conservative or overly risky policies, yielding instability that hampers convergence to a valuable policy. This behavior was witnessed in multiple of our experiments (Section~\ref{sec:experiments}).

\paragraph{Our contributions}  We introduce \emph{Threshold UCT (T\nobreakdash-UCT)}, an online MCTS-based constrained decision-making algorithm designed so as to seek policies that are both safe and valuable. T-UCT achieves this by estimating the \emph{Pareto curves} of the cost-payoff tradeoff in an online fashion. In every step, T-UCT uses these Pareto estimates to play a randomized action mixture optimal w.r.t. the current cost threshold. This is done both during the actual agent's action selection and during tree search. The latter phase resolves the exploration/exploitation tradeoff through a variant of the UCT (\emph{upper confidence bound on trees}~\cite{KS06}) approach, adapted to the constrained setting. In particular, T-UCT's exploration is cost-sensitive and the algorithm comes with a new threshold update rule, which ensures that the agent is not incorrectly driven into excessive risk. 
%
%
We evaluate T-UCT on benchmarks from the literature, including a model of an autonomous vehicle navigating the streets of Manhattan. Our experiments show that T-UCT significantly outperforms state-of-the-art algorithms, demonstrating a notable sample efficiency and stable results across different environments.

\paragraph{Further related work}

The problem of constrained decision making under uncertainty, formalized via the notion of constrained Markov decision processes~\cite{Altman:1999:CMDP-book} has received lot of attention in recent years, with approaches based on linear programming~\cite{Altman:1999:CMDP-book,PouMPKGB15:constrained-POMDP,cmdp-bayesian}, heuristic search~\cite{UndH10:constrained-pomdp-online,STW16:BWC-POMDP-state-safety}, primal-dual optimization~\cite{ChoGJP:2017:risk-constrained-policy,DBLP:conf/iclr/TesslerMM19}, local policy search~\cite{pmlr-v70-achiam17a}, backward value functions~\cite{pmlr-v119-satija20a}, or Lyapunov functions~\cite{NEURIPS2018_4fe51490}. Unlike these works, our paper focuses on the MCTS-based approach to the problem, due to its scalability and extendibility to more complex domains.

The RAMCP algorithm has been extended into an AlphaZero-like MCTS-based learning algorithm RAlph in~\cite{BraCNV:20:RL-risk-averse}. In this paper, we do not consider function approximators and instead focus on the correctness and efficiency of the underlying tree search method. The recent learning algorithm ConstrainedZero~\cite{moss2024constrainedzero} computes deterministic policies, considers chance constraints, and does not track the cost risk of stochastic decision, thus solving a problem different from ours.

Our work is also related to \emph{multi-objective (MO)} planning~\cite{BarettNarayanan:CHVI,JMLR:v15:vanmoffaert14a}. There, the task is to estimate tradeoffs among multiple payoff functions w.r.t. various solution concepts (including Pareto optimality). T-UCTs approach of performing full Bellman updates of Pareto curves during backpropagation is similar in spirit to  \emph{convex hull MCTS}~\cite{convex-hull-mcts}. However, MO approaches do not consider constrained optimization and thresholds; the main novelty of T-UCT is using the Pareto curves to guide the tree search towards valuable constraint-satisfying parts via threshold-based action selection and sound threshold updates.

Constrained decision making is also related to \emph{risk-sensitive} planning and learning (e.g., \cite{nips-cvar,NIPS2015_64223ccf,PraF:22:risk-rl-book,HYV16:risk-pomdps,Vulcan,Kretisnky-cvar}), where safety is enforced by putting a constraint on some \emph{risk-measure} of the underlying policy. Some risk measures, such as \emph{chance constraints,} can be expressed in our framework by encoding accumulated payoffs into states.








\section{Preliminaries}

\label{sec:prelims}

We denote by \( \distr(X) \) the set of all probability distributions over a finite support \( X \). We formalize the constrained decision making problem via the standard notion of constrained Markov decision processes (CMDPs).

\begin{definition}
A \emph{constrained Markov decision process (CMDP)} is a tuple \( \cmdp = (\states, \act,\trans, \rew, \pen, \sinit ) \) where:
\begin{compactitem}
\item \( \states \) is a finite set of \emph{states,}
\item \( \act \) is a finite set of \emph{actions,}
\item \( \trans \colon \states \times \act \rightarrow \distr(\states) \) is a probabilistic \emph{transition function;} we abbreviate \( \trans(s,a)(t) \) to \( \trans(t\mid s,a) \),
\item \( \rew \colon \states \times \act \times \states \rightarrow \Rset\) is a \emph{reward function,}
\item \( \pen \colon \states \times \act \times \states \rightarrow \Rset \) is a \emph{cost function,} and
\item \( \sinit \in \states\) is the initial state 

\end{compactitem}
\end{definition}

\paragraph{CMDP dynamics}

CMDPs evolve identically to standard MDPs. A \emph{history} is an element of \( (\states\act)^*\states \), i.e., a finite alternating sequence of states and actions starting and ending in a state. A \emph{policy} is a function assigning to each history a distribution over actions. 

Under a given policy \( \pi \), a CMDP evolves as follows: we start in the initial state; i.e., the initial history is \( \hist_0 = \sinit \). Then, for every timestep \( \tstep \in \{0,1,2,\ldots\} \), given the current history \( \hist_\tstep = \sinit a_0 s_1 a_1 \ldots s_{\tstep-1}a_{\tstep-1}s_\tstep \), the next action \( a_\tstep \) is sampled from \( \pi \):  \( a_\tstep \sim \pi(\hist_\tstep) \). The next state \( s_{\tstep+1} \) is sampled according to the transition function, i.e, \( s_{\tstep+1} \sim \trans(s_\tstep,a_\tstep) \). Then, the agent obtains the reward \( r(s_\tstep,a_\tstep,s_{\tstep+1}) \) and incurs the cost \( \pen(s_\tstep,a_\tstep,s_{\tstep+1}) \). The current history is updated to \( \hist_{\tstep+1} = \sinit a_0 s_1 a_1 \ldots s_{\tstep-1}a_{\tstep-1}s_\tstep a_{\tstep} s_{\tstep+1}  \) and the process continues in the same fashion \emph{ad infinitum.}

We denote by $\probm^\pi_{}(E)$ the probability of an event $ E $ under policy $\pi$, and by $ \E^\pi_{}[X] $ the expected value of a random variable $ X $ under $\pi$. We denote by \( |h| \) the \emph{length} of history \( h \), putting \( |\sinit a_0 \ldots s_{\tstep-1}a_{\tstep-1}s_\tstep| = \tstep \).

We will sometimes abuse notation and use a history in a context where a state is expected - in such a case, the notation refers to the last state of a history. E.g. \( \trans(-\mid h,a) \) denotes a transition probability distribution from the last state of \( h \) under action \( a \). 

\paragraph{Problem statement}

Under a fixed policy \( \pi \), the agent accumulates (with possible discounting) both the rewards and costs over a finite \emph{decision horizon} \( \hor \):
\begin{align*}
\payoff_\pi &= \E^\pi[\sum_{\tstep = 0}^{\hor-1} \gamma_\rew^\tstep \cdot \rew(s_{\tstep}, a_{\tstep},s_{\tstep+1}) ], \\
\acost_\pi &= \E^\pi[\sum_{\tstep = 0}^{\hor-1} \gamma_\pen^\tstep \cdot \pen(s_{\tstep}, a_{\tstep},s_{\tstep+1}) ],
\end{align*}
where \( \gamma_\rew,\gamma_\pen \in (0,1] \) are reward and cost \emph{discount factors.} 

Our goal is to maximize the accumulated payoff while keeping the accumulated cost below a given threshold. Formally, given a CMDP, the horizon \( \hor \in \Nset\), discount factors \( \gamma_\rew,\gamma_\pen \in (0,1] \), and a \emph{cost threshold } \( \thr \in \Rset_{\geq 0} \), our task is to solve the following constrained optimization problem:
\begin{align*}
\max_{\pi} ~&\payoff_\pi \\
\text{subject to } &\acost_\pi \leq \thr.
\end{align*}
Our algorithm tackles the above problem in an online fashion, producing a local approximation of the optimal constrained policy.
%




\section{Threshold UCT} \label{sec:tuct}

We propose a new algorithm for CMDP planning, \emph{Threshold UCT (T-UCT)}.
Like many other MCTS-based algorithms, T-UCT only requires access to a generative simulator of the underlying CMDP, i.e., an algorithm allowing for an efficient sampling from \( \trans(-| s,a) \), given \( (s,a) \); and providing \( \rew(s,a,s') \) and \( \pen(s,a,s') \) for given \( (s,a,s') \).


\paragraph{History-action values, feasibility}

We consider payoffs achievable by a policy \( \pi \) after witnessing a history \( h \) and possibly also playing action \( a \):
\begin{align*}
\payoff_\pi(h) &= \E^\pi[\sum_{\tstep = |h|}^{\hor-1} \gamma_\rew^{\tstep-|h|} \cdot \rew(s_\tstep, a_\tstep, s_{\tstep+1}) \mid h], \\
\payoff_\pi(h, a) &= \E^\pi[\sum_{\tstep = |h|}^{\hor-1} \gamma_\rew^{\tstep-|h|} \cdot \rew(s_\tstep, a_\tstep, s_{\tstep+1}) \mid h, a],
\end{align*}
where \( (\cdot|h) \) is a condition of producing history \( h \) in the first \( |h| \) steps and \( (\cdot|h,a)\) is a condition that \( a \) is played immediately after witnessing \( h \). The quantities \( \acost_\pi(h) \) and \( \acost_\pi(h,a) \) are defined analogously. We say that \( \pi \) is \( \thr \)-feasible from \( h \) if \( \acost_\pi(h) \leq \thr \).

\paragraph{Achievable vectors} A vector \( (c, r) \in \Rset \times \Rset \) is \emph{achievable} from history \( h \) if there exists a policy \( \pi \) such that \( \acost_\pi(h) \leq c \) and \( \payoff_\pi(h) \geq r \). Similarly, we say that \( (c,r) \) is achievable from \( (h,a) \) if \( \acost_\pi(h, a) \leq c \) and \( \payoff_\pi(h, a) \geq r \) for some \( \pi. \)

We write \( (c',r') \preceq (c,r) \) if \( c' \geq c \) and \( r' \leq r \). A \( \preceq \)\nobreakdash-closure of a set \( X \subseteq \Rset \times \Rset \) is the set of all vectors \( (c',r') \in \Rset \times \Rset \) s.t. \( (c',r') \preceq (c,r) \) for some \( (c,r) \in X \).

\paragraph{Pareto sets}
A \emph{Pareto set} of history \( h \) 
is the set of all vectors achievable from \( h \), while the Pareto set of \( (h,a) \) is the set of all vectors achievable from \( (h,a) \).

It is known~\cite{MO-MDP-verif,BarettNarayanan:CHVI} that the Pareto sets are (i) convex (since we allow randomized policies), and (ii) \( \preceq \)-closed (i.e., if \( (c,r) \) belongs to the set and \( (c',r') \preceq (c, r) \), then \( (c',r') \) also belongs to the set). From (ii) it follows that a Pareto set is wholly determined by its \emph{Pareto curve}, i.e. the set of all points maximal w.r.t. the \( \preceq \)-ordering. Furthermore, in finite MDPs, the Pareto curve is piecewise linear, with finitely many pieces. The whole Pareto set can then be represented by a finite set of \emph{vertices,} i.e., points in which the piecewise-linear curve changes its slope; indeed, the Pareto set is the \( \preceq \)\nobreakdash-closure of the convex hull of the set of vertices.
%
In what follows, we denote by \( \paretotrue(h) \) and \( \paretotrue(h,a) \) these finite representations of the Pareto sets of \( h \) and \( (h,a) \), respectively.

\paragraph{Bellman equations for Pareto sets} Pareto sets in CMDPs obey local optimality equations akin to classical unconstrained MDPs. To formalize these, we need additional notation. The sum \( X + Y \) is the standard Minkowski sum of the sets of vectors. For a vector \( (a,b) \) we define \( X\cdot (a,b) = \{(x\cdot a, y \cdot b) \mid (x,y) \in X\} \), with  \( X \cdot a \) as a shorthand for \( X \cdot (a,a) \). It is known~\cite{BarettNarayanan:CHVI,mo-stochgames} that for the finite-vertex representation of Pareto sets it holds: 
\begin{align}
\paretotrue(h) &= \prune\Big(\bigcup_{a \in \act}\paretotrue(h,a) \Big)  \label{eq:bellman-h} \\
\paretotrue(h,a) &= \prune \Big( \sum_{t \in \states} \trans(t|h,a)\big(\paretotrue(hat) \cdot(\gamma_\pen,\gamma_\rew)\nonumber \\ &\qquad+\{(\pen(h,a,t),\rew(h,a,t))\}\big)\Big), \label{eq:bellman-ha}
\end{align}
%
where the \( \prune \) operator removes all points that are \( \preceq \)\nobreakdash-dominated by a convex combination of some other points in the respective set.

\SetKwFunction{puct}{ThresholdUCT}
\SetKwProg{proc}{Procedure}{}{}
\SetKwProg{func}{Function}{}{}
\SetKwFunction{getleaf}{GetLeaf}
\SetKwFunction{expand}{Expand}
\SetKwFunction{rollout}{Rollout}
\SetKwFunction{propagate}{Propagate}
\SetKwFunction{getact}{GetActionDist}
\SetKwFunction{updpareto}{UpdateThr}

\begin{algorithm}[]
\caption{Threshold UCT}

    
    \label{alg:pareto-uct}
 \proc{\( \puct(\hor, \thr) \)}{
 \( h \gets s_0  \)\;
 \While{\( \hor > 0 \)}{
 \Repeat{timeout}{
 {\color{blue} $leaf \gets \getleaf(h, \thr)$}\;
                 $\mathit{newleaf} \gets \expand(leaf)$\label{aline:expansion}\;
 \propagate($\mathit{newleaf}, h$);
 }
             \label{aline:timeout}
             
 {\color{blue}
         $\sigma \gets  \getact(h, \thr, 0)$       \label{aline:main-act-select}\;
 }
         \( a \sim \sigma \);\,\, play \( a \) and observe new state \( t \) \label{aline:actual-transition}\;
 {\color{blue}
         $\thr \gets \updpareto(\thr, \sigma, a, t)$\label{alg:pareto-uct:update-1}\;
 }
         $h \gets hat$;
         \( \hor \gets \hor - 1 \)\;
 }}

 \func{$\getleaf(h, \tilde{\thr})$}{
 \While{$h$ is not a leaf}{
 {\color{blue} $\sigma \gets \getact(h, \tilde{\thr}, 1)$\;}
            \( a \sim \sigma \)$;\,\, t \sim \hat{\delta}(-\mid h, a)$\;
 {\color{blue} $\tilde{\thr} \gets \updpareto(\tilde{\thr}, \sigma, a, t)$\label{alg:pareto-uct:update-2}}\;
            $h \gets hat$\;
 }
 \Return $h$
}

 \proc{$\propagate(h,\mathit{root})$}{
 {\color{blue}
    
    \( (c,r) \leftarrow \rollout(h)\)\label{aline:rollout}\;
       $\paretoest(h) \gets \{(c,r), (0,0)\}$\label{aline:rollout-end}\;
 }
 \While{$h \neq \mathit{root}$}{
        $\text{write } h \text{ as } h'as$;
        $h \gets \text{$h'$}$\;
 {\color{blue}
 update \( \paretoest(h,a) \) via equation~\eqref{eq:bellman-ha}\;
 update \( \paretoest(h) \) via equation~\eqref{eq:bellman-h}\;
        
 }
 }
 }
    
 \func{\(\getact(h,\thr,e)\)}{
       $\widetilde{\paretoest} \gets \prune\big(\displaystyle\bigcup_{a'\in \act} \paretoest(h, a') + \{\uct(h, a') \cdot(-e, e)\}\big)$\label{aline:prune-and-uct}\;
       $\widetilde{C}^- \gets \{c \mid (c,r) \in \widetilde{\paretoest} \wedge c \leq \thr\}$\;$\widetilde{C}^+ \gets \{c \mid (c,r) \in \widetilde{\paretoest} \wedge c \geq \thr\}$\;
 \If{$\widetilde{C}^-\! = \emptyset$\label{aline:allbad-start}}{
              \label{alg:pareto-uct-getact:ret1}
       $a \gets \argmin_{a'} \min \{c \mid (c,r) \in \widetilde{\paretoest}(h, a')\}$\;
 \Return $\detdistr(a)$\label{aline:allbad-end}\;
 }
 \ElseIf{$\widetilde{C}^+\! = \emptyset$ \label{aline:allgood-start}}{
       $a \gets \argmax_{a'} \max \{r \mid (c,r) \in \widetilde{\paretoest}(h, a')\}$\;
 \Return $\detdistr(a)$\label{alg:pareto-uct-getact:ret2}\label{aline:allgood-end}\;}
 \Else{
       \label{aline:mixing-start}
       $c_l \gets \max \widetilde{C}^-$\;
       $a_l \gets$ action realising $c_l$\;
       $c_h \gets \min \widetilde{C}^+$\;
       $a_h \gets$ action realising $c_h$\label{alg:pareto-uct-getact:ch}\;
       $\sigma_h \gets \frac{\thr - c_l}{c_h - c_l}$;\,\,
       $\sigma_l \gets 1 - \sigma_h$\;
 \Return $\mixdistr(\sigma_l, a_l, \sigma_h, a_h)$\label{aline:mixing-end}\;
 }
 }
\end{algorithm}


\paragraph{T-UCT: Overall structure} T-UCT is presented in Algorithm~\ref{alg:pareto-uct}. 
%
%
It follows the standard Monte Carlo tree search (MCTS) framework, with blue lines highlighting parts that conceptually differ from the setting with unconstrained payoff optimization (the whole procedure \texttt{GetActionDist} is constraint-specific, and hence we omit its coloring). The algorithm iteratively builds a \emph{search tree} whose nodes represent histories of the CMDP, with child nodes of \( h \) representing one-step extensions of \( h \). Each node stores additional information, in particular the estimates of \( \paretotrue(h) \) and \( \paretotrue(h,a) \), denoted by \( \paretoest(h) \) and \( \paretoest(h,a) \) in the pseudocode. The tree structure is global to the whole algorithm and not explicitly pictured in the pseudocode. 

The algorithm uses transition probabilities \( \trans \) of the CMDP.
If these are not available (e.g., when using a generative model of the CMDP),
we replace \( \trans \) with a sample estimate based on the visit count of transitions during the tree search.
The estimates are  updated globally with every sample from the simulator (omitted in the pseudocode). 
In what follows, we denote by $\hat{\trans}$ either the real transition probabilities or, if these are unavailable, their sample estimates.

Initially, the tree contains a single node representing the history \( h_0 = \sinit \).
In each decision step \( 1,2,\ldots,T \), T-UCT iterates, from the current root \( h \), the standard four stages of MCTS until the expiry of a given timeout. The stages are: (i) \emph{search,} where the tree is traversed from top to bottom using information from previous iterations to navigate towards the most promising leaf (function \( \texttt{GetLeaf} \)); (ii) \emph{leaf expansion,} where a successor of the selected leaf is added to the search tree (line~\ref{aline:expansion}); followed by (iii) \emph{rollout:} where the Pareto curve of the new leaf is estimated, e.g., via Monte Carlo simulation following some default policy (lines~\ref{aline:rollout}--\ref{aline:rollout-end}), or possibly via a pre-trained predictor. Note that we also add a tuple $(0,0)$ to the curve to make the exploration ``cost-optimistic'', even if the rollout policy is unable to find safe paths. Finally, T-UCT performs (iv) \emph{backpropagation,} of the newly obtained information (particularly, the Pareto curve estimates) from the leaf back to the root (the rest of procedure \texttt{Propagate}). We provide more details on the individual stages below.

After this, T-UCT computes and outputs an action distribution $\sigma$ from which action \( a \) to be played is sampled (lines~\ref{aline:main-act-select}--\ref{aline:actual-transition}). The action is performed, and a new state \( t \) of the environment (\ref{aline:actual-transition}) and the immediate reward and cost (omitted from pseudocode) are incurred. The cost threshold \( \thr \) is then updated (line~\ref{alg:pareto-uct:update-1}, details below) to account for the cost incurred, discounting, and cost risk of the stochastic decision. The node \( hat \) becomes the new root of the tree and the process is repeated until a decision horizon is reached.

The key components distinguishing T-UCT from standard UCT are the backpropagation, action selection, and threshold update. In the following, we present these components in more detail.

\paragraph{Backpropagation} T-UCT's backpropagation is similar to \emph{convex-hull MCTS}~\cite{convex-hull-mcts}: performing full Bellman updates of the Pareto set estimates according to equations~\eqref{eq:bellman-h} and~\eqref{eq:bellman-ha} (using estimates \( \hat{\delta} \) in lieu of \( \delta \) when necessary). 

\paragraph{Action selection} The function \texttt{GetActionDist} computes an action distribution based on the current estimates of Pareto curves. In the search phase of the algorithm (where \( e \) is set to 1),
we encourage playing under-explored actions with an exploration bonus as depicted on line~\ref{aline:prune-and-uct}:
$$\uct(h, a) = C \cdot \alpha(h) \cdot \sqrt{\frac{\log{N(h)}}{N(h, a)+1}},$$
where $N(h)$ and $N(h, a)$ are the visit counts of node $h$ and action $a$ in $h$, respectively,
$C$ is a fixed static exploration constant, and $\alpha(h)$ is a dynamic adjustment of the exploration constant
equal to the difference between the maximum and minimum achievable value (payoff or cost) in $h$.
Note that for each \( a \in \act \), the bonus is applied to all vertices in \( \paretoest(h,a) \); thereafter the exploration-augmented estimate \( \widetilde{\paretoest} \) of \( \paretotrue(h) \) is computed via~\eqref{eq:bellman-h}.
When playing the actual action, the exploration bonus is disabled by setting \( e=0 \).

If there is no feasible policy based on \( \widetilde{\paretoest} \) (i.e., policy satisfying \( \acost_\pi(h) \leq \thr\)), we return a distribution which deterministically selects the action minimizing the future expected cost (lines~\ref{aline:allbad-start}--\ref{aline:allbad-end}).
Conversely, if the expected future cost can be kept below \( \thr \) no matter which action is played, we deterministically select the action with the highest future \( \thr \)-constrained payoff (lines~\ref{aline:allgood-start}--\ref{aline:allgood-end}).
 
Otherwise, we find vertices \( v_l = (c_l,r_l) \) and \( v_h = (c_h,r_h) \) of \( \widetilde{\paretoest} \) whose future cost estimates are the nearest to \( \thr \) from below and above, respectively. Due to pruning, necessarily \( r_l \leq r_h \). We identify actions \( a_l \) and \( a_h \) that realize the cost-reward tradeoff represented by these vertices, and mix them in such a proportion that the resulting convex combination of \( c_l \) and \( c_h \) equals \( \thr \), so that the ``cost budget'' is maxed out (lines~\ref{aline:mixing-start}--\ref{aline:mixing-end}). The resulting mixture is returned to the agent or the tree search procedure.



\paragraph{Threshold update} After an action \( a \) is played and its outcome \( t \) observed, the cost threshold \( \thr \) must be updated to account for (a) the immediate cost incurred, (b) the discount factor $\discount_\pen$, and (c) the predicted contribution of outcomes other than \( t \) to the overall \emph{expected} cost achieved by the policy from \( h \). The update is performed by the \texttt{UpdateThr} function, described below.

If the transition $hat$ has not been expanded yet, the update involves only the subtraction of the immediate cost and the division by $\discount_\pen$.
Otherwise \texttt{UpdateThr}(\( \thr,\sigma,a,t \)) first computes the intermediate threshold \( \thract \): If \( \sigma \) is deterministic, we set \( \thract \) to \( \thr \); if \( \sigma \) is a stochastic mix of two different actions, we check if the action \( a \) sampled from \( \sigma \) is the \( a_l \) or \( a_h \) from line~\ref{aline:mixing-end};  accordingly, we set \( \thract \) to either \( c_l \) or \( c_h \).

Based on \( \thract \), the value of \( \thr \) is updated to a new value \(\thr'\) in a way depending on ``how much'' $\thract$ is feasible from $ha$.
In the following, we use \( \conv  \) to denote the convex hull operator. There are three cases to consider, similar to those in \texttt{GetActionDist}:

\emph{Case ``mixing''}: In the first case, there exists a maximal \( \rho \in \Rset\) such that \( (\thract,\rho) \in \conv(\paretoest(h,a)) \); i.e., by~\eqref{eq:bellman-ha} $\rho$ is the maximum real satisfying
\begin{equation}
\label{eq:conv-comb}
\begin{aligned}
(\thract,\rho) = &\sum_{s\in\states} \hat{\trans}(s\mid h,a)\cdot \big[(c_{s}\cdot \gamma_\pen,r_{s}\cdot \gamma_\rew)\\&+(c(h,a,s),r(h,a,s)) \big],
\end{aligned}
\end{equation}
for some points \( (c_{s},r_{s}) \) such that \( (c_{s},r_{s}) \in \conv(\paretoest(has)) \) for each \( s \in \states \).
\texttt{UpdateThr}(\( \thr,\sigma,a,t \)) then updates \( \thr \) to
\begin{equation}
\label{eq:update-mixing}
\thr' \gets \pen_t.
\end{equation}
This ensures that no matter which \( t \) is sampled, the overall expected cost in \( ha \) is bounded by \( \thract \), provided that the points in the Pareto curve estimates are achievable. 

\emph{Case ``surplus''}: The second case is when there is a surplus in the cost budget, i.e., $\thract > c_{\max} = \max \left\{ c \mid (c, r) \in \paretoest(h,a)\right\}$.
A naive update would be to ignore the surplus and continue as if $\thract$ was $c_{\max}$.
Per the Pareto curve estimates, there would be no decrease in payoff since indeed, under the estimated dynamic $\hat{\trans}$, the optimal policy does not exceed the cost $c_{\max}$.
However, $\hat{\trans}$ can be incorrect and too cost-optimistic. Hence, ignoring the surplus \( \thract - c_{\max} \) could over-restrict the search in future steps.
Instead, we distribute the surplus over all possible outcomes of \( a \) proportionally to the outcomes' predicted cost. Formally, \texttt{UpdateThr}(\( \thr,\sigma,a,t \)) identifies a vertex \( (c_{\max},\rho) \in \paretoest(h,a)\) and computes vectors \( (c_{s},r_{s}) \) satisfying~\eqref{eq:conv-comb} with left-hand side set to \( (c_{\max},\rho) \). Then, it computes
\begin{equation}
\label{eq:update-surplus}
\thr' \gets c_t + (\thract - c_{\max})\frac{B - c_t}{\imcost(h, a) + \gamma_\pen B - c_{\max}},
\end{equation}
where $\imcost(h, a) = \sum_{s} \hat{\trans}(s \mid h, a)\cdot \pen(h, a, s)$ is the expected immediate cost,
and \( B = T \cdot \max_{(s,a,t)} \pen(s, a, t)\) is an upper bound on the accumulated cost of any trajectory.

\emph{Case ``unfeasible''}: Finally, if $\thract$ is unfeasible according to $\paretoest(h,a)$,  i.e., when $\thract < c_{\min} = \min \left\{ c \mid (c, r) \in \paretoest(h,a)\right\}$, \texttt{UpdateThr}(\( \thr,\sigma,a,t \)) distributes all the
missing cost to the current outcome \( t \) (unlike in the previous case). Formally, the update is the following:
\begin{equation}
\label{eq:ramcp-update}
\thr' \gets c_t - \frac{c_{\min} - \thract}{\hat{\trans}(t \mid h, a) \gamma_\pen},
\end{equation}
where the \( c_{s} \)-values are, again, computed by applying~\eqref{eq:conv-comb} to a vector \( (\pen_{\min}, \rho) \in \paretoest(h,a) \).

\paragraph{Theoretical analysis}

The threshold update function \texttt{UpdateThr} enjoys two notable properties that are important for the algorithm's correctness:
First a), the update never increases the estimated expected threshold, i.e.,
\begin{equation}
    \label{eq:thr-update-exp-decrease}
 \thr \geq \expect \left[\pen(h, a, t) + \discount_\pen \cdot \thr'\right]\footnote{The expectation is taken under the distribution $\hat{\delta}$.},
\end{equation}
and b), the updated value is feasible according to the estimates, i.e., 
\begin{equation}
\label{eq:thr-update-feasible}
        \thr' \geq c_{t},
\end{equation}
whenever $\thr$ is feasible according to the estimates.

The properties are important for two reasons. First, they ensure the asymptotical convergence of the algorithm, and second,
they prevent the algorithm from an excessive increase of the threshold under limited exploration.

Concerning the asymptotical convergence, we prove the $\varepsilon$\nobreakdash-soundness of T-UCT in the sense formalized by the following theorem, proved in  Supplementary material (\ref{app:proofs}).

\begin{theorem}
\label{thm:convergence}
Let $\mathcal{C}$ be a CMDP and $\thr$ a threshold such that
there exists a $\thr$-feasible policy.
Then for every $\varepsilon \in \reals^+$, there exists $n$ such that
T-UCT with $n$ MCTS iterations per action selection is $(\thr + \varepsilon)$-feasible.
\end{theorem}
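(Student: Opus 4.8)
The plan is to run an asymptotic argument: show that as the per‑step iteration budget $n\to\infty$ the expected cost of the policy actually executed by T\nobreakdash-UCT converges to the cost of a $\thr$-feasible policy, and then choose $n$ large enough to be within $\varepsilon$. I would split the argument into two parts. Part~(I), \emph{convergence of estimates}: establish that the sampled dynamics satisfy $\hat{\trans}\to\trans$ and that the stored Pareto estimates $\paretoest(h)$, $\paretoest(h,a)$ converge (in Hausdorff distance $d_H$) to the true sets $\paretotrue(h)$, $\paretotrue(h,a)$, uniformly over the finitely many histories reachable within the horizon $\hor$. Part~(II), a \emph{feasibility invariant}: use the two update properties \eqref{eq:thr-update-exp-decrease} and \eqref{eq:thr-update-feasible} to telescope the executed per‑step costs against the evolving threshold, first pretending the estimates are exact and then absorbing the $O(\eta)$ error coming from Part~(I).

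For Part~(I): since $\hor$ is finite, the sub‑tree of histories reachable from any root within $\hor$ steps is finite. The bonus $\uct(h,a)=\explconst\cdot\alpha(h)\sqrt{\log N(h)/(N(h,a)+1)}$ diverges whenever $N(h)\to\infty$ while $N(h,a)$ stays bounded, which — together with the fact that the ``infeasible'' and ``all‑feasible'' branches of \texttt{GetActionDist} pick the extreme‑cost / extreme‑reward action, and with the standard MCTS convention of expanding untried actions first — forces every action at every reachable history to be played infinitely often; hence the whole depth‑$\hor$ sub‑tree is eventually expanded and every node is visited $m(n)\to\infty$ times. Then $\hat{\trans}(\cdot\mid h,a)\to\trans(\cdot\mid h,a)$ by concentration of the visit‑count estimator; the rollout assigns the Pareto estimate $\{(0,0)\}$ to every depth‑$\hor$ leaf, which is exactly $\paretotrue$ there; and because the Bellman operators \eqref{eq:bellman-h}--\eqref{eq:bellman-ha} (Minkowski sums, scalings, $\prune$) are non‑expansive/Lipschitz in the child sets and Lipschitz in $\hat{\trans}$, a backward induction on depth propagates convergence up to the root, uniformly over the finite node set. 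Since also $\uct(h,a)\to 0$, the exploration‑augmented curve $\widetilde{\paretoest}$ used in \texttt{GetActionDist}$(h,\thr,0)$ converges to $\paretotrue(h)$.

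For Part~(II): let $\eta=\eta(n)\to 0$ be a common bound on $\|\hat{\trans}-\trans\|$ and on all the Hausdorff errors of Part~(I). Write $h_i$ and $\thr_i$ for the (random) root and threshold held after $i$ steps, with filtration $\mathcal{F}_i$, and maintain the invariant ``$\thr_i$ is $O((\hor-i)\eta)$-feasible from $h_i$'', i.e.\ $\min\{c\mid (c,r)\in\paretotrue(h_i)\}\le\thr_i+O((\hor-i)\eta)$; the base case $i=0$ is the hypothesis that a $\thr$-feasible policy exists. In the inductive step, near‑accuracy of $\widetilde{\paretoest}$ puts \texttt{GetActionDist} in its ``all‑feasible'' or ``mixing'' branch (up to $O(\eta)$ slack), so the returned mixture $\sigma_i$ has promised future cost $\le\thr_i+O(\eta)$, and property \eqref{eq:thr-update-feasible} together with $\eta$-accuracy of the Pareto estimates keeps $\thr_{i+1}$ $O((\hor-i-1)\eta)$-feasible from $h_{i+1}$, so the invariant propagates with only an $O(\eta)$ per‑step loss. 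Simultaneously \eqref{eq:thr-update-exp-decrease} gives $\thr_i\ge\E[\pen(h_i,a_i,t_{i+1})+\discount_\pen\thr_{i+1}\mid\mathcal{F}_i]$ under $\hat{\trans}$, and switching to the true law $\trans$ of the executed transition costs another $O(\eta)$; multiplying by $\discount_\pen^{\,i}$, taking expectations, and telescoping over $i=0,\dots,\hor-1$ (using $\thr_{\hor}\ge -O(\eta)$, which the invariant yields since $\paretotrue(h_{\hor})=\{(0,0)\}$) gives $\acost_\pi=\E\big[\sum_{i=0}^{\hor-1}\discount_\pen^{\,i}\,\pen(h_i,a_i,t_{i+1})\big]\le\thr+O(\hor\,\eta)$ for the history‑dependent policy $\pi$ implemented by T\nobreakdash-UCT. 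Choosing $n$ with $O(\hor\,\eta(n))\le\varepsilon$ finishes the proof.

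The hard parts I expect are, first, the exploration guarantee in Part~(I): because T\nobreakdash-UCT's action selection is threshold‑based rather than a plain $\argmax$, an under‑explored action whose large bonus shifts its Pareto vertices into the low‑cost/high‑reward region is \emph{not} selected by the ``mixing'' branch, so one must argue it is nonetheless played infinitely often (via the other branches, or the expand‑untried‑first convention) for the depth‑$\hor$ sub‑tree to be fully expanded and the estimates to converge. Second, propagating the Hausdorff error through the composite $\prune$/Minkowski Bellman operator and through the three threshold‑update formulas \eqref{eq:update-mixing}--\eqref{eq:ramcp-update}, and reconciling the mismatch between the $\hat{\trans}$ used inside \texttt{UpdateThr} and the true $\trans$ governing the executed transition, will require uniform Lipschitz constants over the finite horizon‑$\hor$ tree and careful handling of the boundary case where $O(\eta)$ slack briefly pushes \texttt{GetActionDist} into the ``infeasible'' branch (whose accumulated effect must be shown to be $O(\hor\,\eta)$). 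Everything else is bookkeeping on top of the two given update properties.
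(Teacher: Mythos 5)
Your proposal follows essentially the same route as the paper's proof: (I) the exploration bonus forces every history reachable within the horizon to be visited arbitrarily often, so the sampled dynamics $\hat{\trans}$ concentrate around $\trans$ uniformly over the finite tree; and (II) the two update properties \eqref{eq:thr-update-exp-decrease} and \eqref{eq:thr-update-feasible} are combined in a backward induction over the horizon to show that the executed policy's cost exceeds the running threshold by at most an error vanishing with the estimation error. There are, however, two concrete gaps. The first is the passage from a high-probability bound to a bound on the expectation: you treat $\eta(n)$ as a deterministic bound on the estimation error, but the accuracy of $\hat{\trans}$ holds only on an event of probability $p(n)<1$, while $\acost_\pi$ in the theorem is an unconditional expectation. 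You must bound the contribution of the complementary event; since any trajectory's cost is at most $C=\hor\cdot\max_{s,a,t}\pen(s,a,t)$, the paper chooses $p\geq 1-\varepsilon/(2C)$ and concludes via $\acost_\pi\leq p\,(\thr+\varepsilon/2)+(1-p)\,C\leq\thr+\varepsilon$. This step is missing from your argument as written and the theorem does not follow without it.

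The second issue is the one you flag yourself but do not resolve: your invariant ``$\thr_i$ is $O((\hor-i)\eta)$-feasible from $h_i$'' breaks exactly when an $O(\eta)$ perturbation pushes the algorithm into the ``unfeasible'' branch of \texttt{UpdateThr}, and in that branch the update does not return an (approximately) feasible threshold. The paper avoids this by proving the case-free statement $\acost(h)\leq\max\{\thr^{i},\thr^{i}_{\min}\}+\frac{\hor-i}{\hor}\varepsilon$, where $\thr^{i}_{\min}$ is the minimal cost achievable from $h$ under the true dynamics: in the unfeasible case the threshold is driven below the estimated minimal cost, the algorithm behaves as a cost minimizer, and $\acost$ is bounded by $\thr_{\min}$ plus error, which re-enters the induction through the $\max$. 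You would need either this strengthened induction hypothesis or an explicit accounting of the accumulated effect of infeasible excursions; asserting that it ``must be shown to be $O(\hor\eta)$'' is not yet a proof. With these two points repaired, your argument coincides with the paper's.
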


We explore the importance of the properties \eqref{eq:thr-update-exp-decrease} and \eqref{eq:thr-update-feasible} in further sections.
Our experiments reveal that in certain cases, RAMCP violates the inequality \eqref{eq:thr-update-exp-decrease}. This leads to an excessive increase of the threshold and results in RAMCP ignoring the cost constraint.
In Supplementary material (\ref{app:ccpomcp-counterexample}), we further show that CC-POMCP can violate \eqref{eq:thr-update-feasible} by not taking the action outcomes into account during threshold updates, yielding a threshold-violating policy.


\section{Experiments} \label{sec:experiments}
\paragraph{Baseline Algorithms}
We compare T-UCT to two state-of-the-art methods for solving CMDPs: \textit{CC-POMCP}~\cite{ccpomcp} and \textit{RAMCP}~\cite{ChaENR:2018:RAMCP}.

\textit{CC-POMCP} is a dual method based on solving the Lagrangian relaxation of the CMDP objective:
\begin{align}
    \label{eq:lagrangian}
 \min_{\lambda \geq 0} \max_{\pi} \payoff_{\pi} + \lambda \cdot (\thr - \acost_{\pi}).
\end{align}
CC-POMCP performs stochastic gradient descent on $\lambda$ while
continuously evaluating \eqref{eq:lagrangian} via the standard UCT search.
For a fixed $\lambda$, the maximization of \eqref{eq:lagrangian} yields
a point on the cost-payoff Pareto frontier, where a larger value of $\lambda$ induces more
cost-averse behavior. A caveat of the method is its sample inefficiency when $\lambda$ converges slowly.


\textit{RAMCP}
is a primal method combining MCTS with linear programming.
The search phase of RAMCP greedily optimizes the payoff
in a standard single-valued UCT fashion, completely ignoring the constraint.
Consequently, the cost is considered only in the second part of the action selection phase,
where the algorithm solves a linear program to find a valid probability flow through
the sampled tree (which serves as a local approximation of the original CMDP)
such that the expected cost under the probability flow satisfies the constraint
and the payoff is maximized.

\paragraph{Benchmarks}

\newcommand{\todocite}{{\color{red}\cite{??}}}
\newcommand{\trp}{p_{trap}}
\newcommand{\slp}{p_{slide}}
\newcommand{\tps}{t}
\newcommand{\sat}{SAT}
\newcommand{\satm}{\sat_M}
\newcommand{\satw}{\sat_W}
\newcommand{\sats}{\sat_S}
\newcommand{\cnv}{SV}
\newcommand{\cnvm}{\cnv_M}
\newcommand{\cnvw}{\cnv_W}
\newcommand{\cnvs}{\cnv_S}
\newcommand{\mpo}{\hat{r}}
\newcommand{\mco}{\hat{c}}

\newcommand{\rndrmap}[1]{%
\begin{tikzpicture}[scale=0.3]%
 \foreach \line [count=\y] in #1 {%
 \foreach \pix [count=\x] in \line {%
 \draw[fill=pixel \pix] (\x,-\y) rectangle +(1,1);%
 }%
 }%
\end{tikzpicture}%
}

\definecolor{pixel .}{HTML}{d4d2cb}
\definecolor{pixel G}{HTML}{f5d742}
\definecolor{pixel W}{HTML}{000000}
\definecolor{pixel T}{HTML}{f5427e}
\definecolor{pixel B}{HTML}{509647}

\def\map{
 {W,W,W,W,W,W,W,W},
 {W,G,.,.,T,G,G,W},
 {W,T,.,B,.,.,T,W},
 {W,G,T,W,T,G,G,W},
 {W,W,W,W,W,W,W,W},
}
\begin{figure}
\centering

\begin{subfigure}[t]{0.22\textwidth}
    \centering
\rotatebox{0}{\resizebox{100px}{!}{
\begin{tikzpicture}[scale=0.3]%
 \foreach \line [count=\y] in \map {%
 \foreach \pix [count=\x] in \line {%
 \draw[fill=pixel \pix] (\x,-\y) rectangle +(1,1);%
 }%
 }%
 \draw[->, blue] (4.5,-2.5) -- (5.1,-2.5) -- (5.5,-1.5);
 \draw[->, blue] (4.5,-2.5) -- (5.1,-2.5) -- (5.5,-3.5);
 \draw[->, blue] (4.5,-2.5) -- (5.7,-2.5);
\end{tikzpicture}%
}}
\caption{A Gridworld map with the initial tile (green), golds (yellow), and traps (red). The possible outcomes of moving right are depicted by blue arrows.}
\label{fig:map}
\end{subfigure}
\hfill
\begin{subfigure}[t]{0.22\textwidth}
    \centering
\resizebox{100px}{!}{
\includegraphics[width=0.23\textwidth]{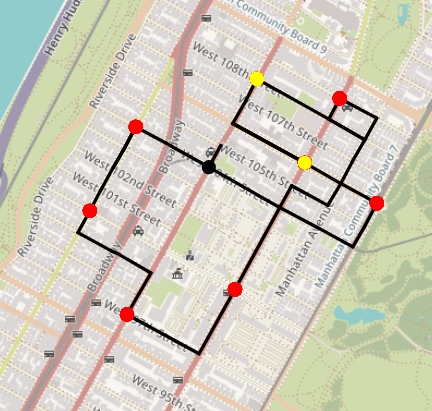}
}
\caption{Manhattan map depicting the initial junction and the agent's trajectory (black), maintenance points (red) and accepted requests (yellow).}
\label{fig:manhattan}
\end{subfigure}

\caption{The Gridworld and Manhattan environments.}

\end{figure}

We evaluated the algorithms on three tasks: two of them are variants of the established \emph{Gridworld} benchmark~\cite{BraCNV:20:RL-risk-averse,UndH10:constrained-pomdp-online}; the third is based on the \emph{Manhattan} environment~\cite{manhattan-paper}, where the agent navigates through mid-town Manhattan, avoiding traffic jams captured by a stochastic model.

\paragraph{Task: Avoid} The setup (see Figure~\ref{fig:map}) involves navigating the agent (robot)
through a grid-based maze with four permissible actions: moving left, right, down, or up.
The agent's movements are subject to stochastic perturbations:
it can be displaced in a direction
perpendicular to its intended movement with probability $\slp$. The agent's objective is to visit special ``gold'' tiles to receive a reward
while avoiding ``trap'' tiles, which, with probability $\trp$, incur a cost of 1 and terminate the environment (the agent suffers fatal damage).


The task is evaluated on a set of 128 small maps {($6\times6$ grid, 5 gold, approx. $10^3$ states)} and 64 large maps {($25\times25$ grid, 50 gold, approx. $10^{17}$ states)}. All maps are sampled from our map generator, which guarantees a varying distribution of traps, walls, and gold.
We provide the generator and the resulting map datasets \emph{GridworldSmall} and \emph{GridworldLarge} in the Supplementary material.

\paragraph{Task: SoftAvoid}
The setup is similar to \emph{Avoid} (including the same generated maps) except that fixed amount of the cost $\trp$ is deterministically incurred upon
stepping on a trap and the environment does not terminate in that case. Thus the goal is to collect as much gold as possible
    while keeping the number of visited trap tiles low.

 
\paragraph{Task: Manhattan}
    The setup involves navigation in the eponymous \emph{Manhattan} environment implemented in \cite{manhattan-paper}.
The agent moves through mid-town Manhattan ({$250$ junctions, 8 maintenance points, approx. $10^{21}$ states}) while the time required to pass each street is sampled from a distribution estimated from real-life traffic data.
Periodically, selected points on the map request maintenance. If the request is accepted, the agent receives a reward of 1 when reaching the point of the request; however,
if it does not deliver the order in the given time limit, it incurs a cost of 0.1.

\begin{figure*}[ht!]
 \centering
 \includegraphics[width=\textwidth]{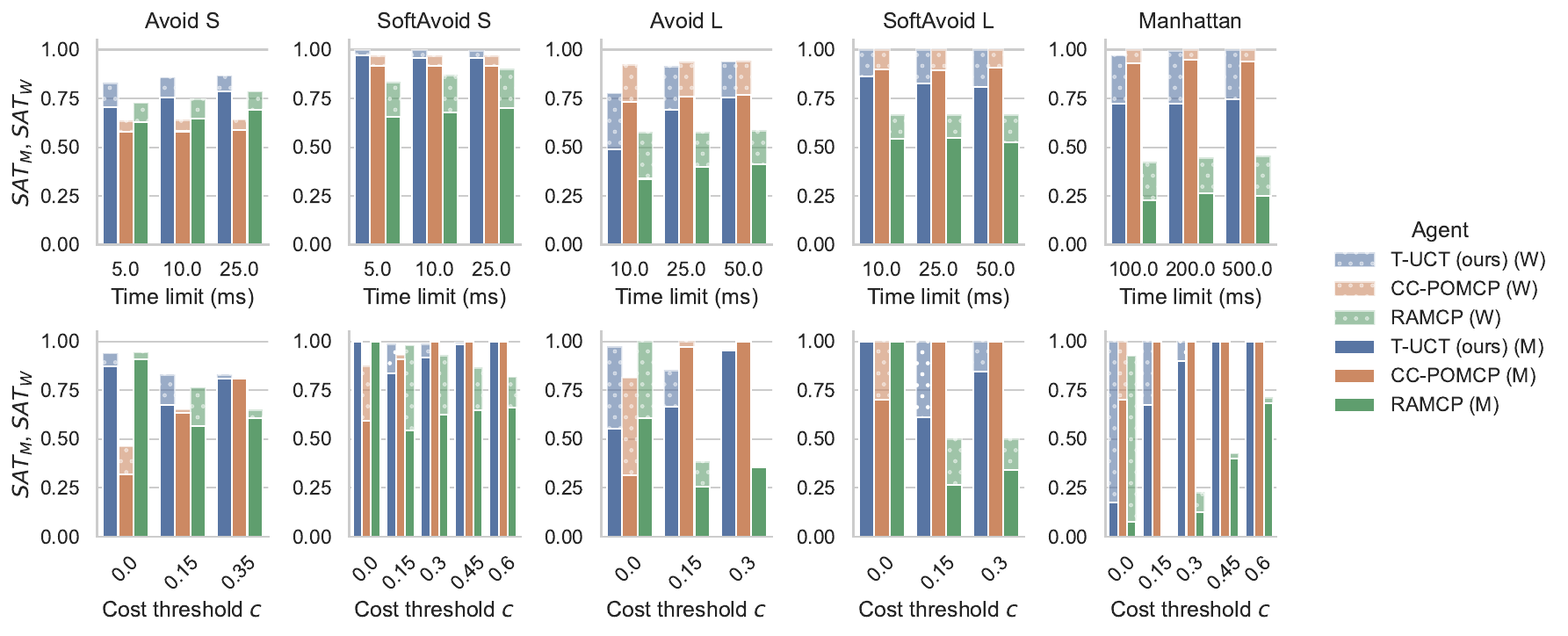}
 \caption{
The fractions of satisfied configurations in the \textit{mean} (plain) and the \textit{weak} (dotted) sense.
Upper row: The overall fraction of satisfied instances across varied time limits.
Lower row: Breakdown of the fractions of satisfied instances across varied thresholds with the time limit set to the maximum value ($25$, $50$, or $500$, respectively).
}
\label{fig:results}
\end{figure*}


\begin{table}[ht!]
    \centering
    \begin{tabular}{lcc}
        \toprule
        \textbf{Task} & \textbf{Reward r} & \textbf{Cost c} \\
        \midrule
        Avoid & 1  & 1 \\
        SoftAvoid & 1 & $\trp$ \\
        Manhattan & 1 & 0.1 \\
        \bottomrule
    \end{tabular}
    \caption{Reward and cost summary.
    In all environments, the agent receives rewards deterministically either on collecting gold or finishing the order.
    In \emph{SoftAvoid} and \emph{Manhattan}, the agent incurs the cost deterministically on the trigger (stepping on a trap or not delivering the order in time),
    while in \emph{Avoid} the cost is incurred with probability $\trp$.}
    \label{tab:task_reward_cost}
\end{table}


\paragraph{Implementation}
We implemented T-UCT, RAMCP, and CC-POMCP algorithms within a common C++ based MCTS framework,
so as to maximize the amount of shared code among the algorithms,
reducing the influence of implementation details on the experiments.
%
The \emph{Gridworld} environments are implemented as part of our C++ codebase, while the \emph{Manhattan} environment
is built on top of the Python implementation provided by~\cite{fimdp}. 

\paragraph{Experimental setup}
Each task was evaluated with various settings of parameters such as the risk threshold $\thr$, map, or
slide probability. We evaluated \emph{GridworldSmall} tasks (both \emph{Avoid} and \emph{Soft-Avoid}) 
on 1144, \emph{GridworldLarge} on 2304, and \emph{Manhattan} on 600 configurations. The full description of configurations and of the hardware setup is in the Supplementary material (\ref{app:experiment-configs}).

We performed 300 independent runs on each configuration.
The algorithms were given a fixed time per decision summarized in Figure \ref{fig:results}; note the time limit on \emph{Manhattan} is looser so as to compensate for
the slower Python implementation of the environment.
The runs on \emph{GridworldSmall}-based tasks had \( T = 100 \) steps, while the runs on \emph{GridworldLarge} and \emph{Manhattan} maps had \( T = 200 \) steps.



\paragraph{Metrics}
For each experiment configuration, we computed the mean collected payoff $\mpo$, mean collected cost $\mco$, and its standard deviation.
Based on these, we report two statistics detailed below: the fraction of satisfied instances $\sat$ and the mean payoff achieved on satisfied instances.

Since the empirical cost $\mco$ suffers from stochastic inaccuracy, we define two levels of satisfaction.
The \textit{mean satisfaction} metric $\satm$ is simply the fraction of instances where the empirical cost satisfies the constraint, i.e., $\mco \leq \thr$.
For the \textit{weak satisfaction} $\satw$ we weaken the constraint which allows us to give it a statistical significance; it is the fraction of instances where the t-test ($\alpha=0.05$) rejects the hypothesis that the real expected cost of the algorithm is more than $\thr + 0.05$.
Since it is meaningless to compare the payoffs of the algorithms on instances where they violate the constraint,
for each baseline algorithm, we further identify the instances where the algorithm and T-UCT both satisfy the constraint (in the weak sense)
and compute the average payoff on these instances.

\begin{figure*}[ht!]
    \centering
    \includegraphics[width=\textwidth]{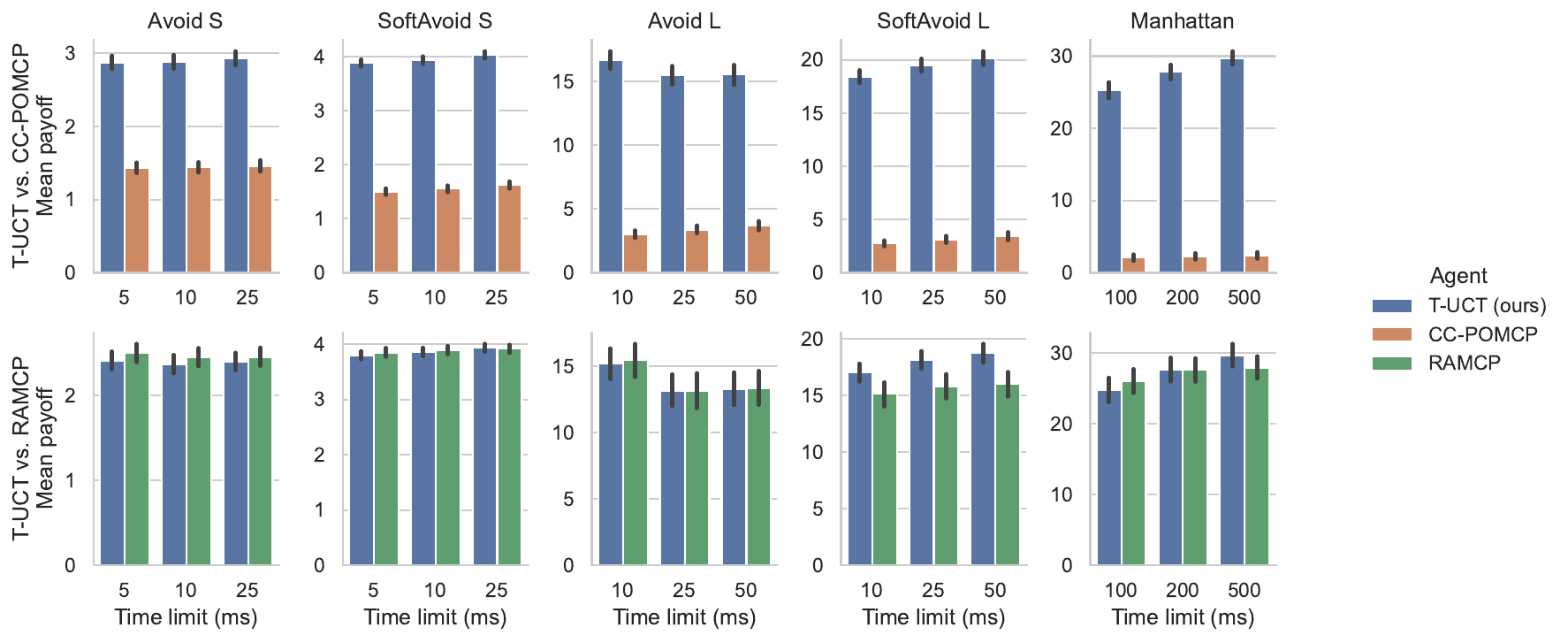}
    \caption{
        Mean payoff of T-UCT compared to each baseline.
        The average is calculated only over instances satisfied (in the weak sense) by both of the considered algorithms.}
   \label{fig:comparison}
\end{figure*}

\paragraph{Results}

%
%

The safety results are shown in Figure~\ref{fig:results}. 
The first row shows the overall safety on individual environments. Each sub-plot displays the fraction of instances satisfied under the given time limits.
Both T-UCT and CC-POMCP consistently solve a substantial proportion of instances, irrespective of the environment,
while RAMCP struggles to find feasible solutions, especially in the \emph{Manhattan} environment.

The second row of Figure \ref{fig:results} contains a breakdown of solved instances across different thresholds.
All algorithms have similar satisfactions ratios when no cost is permitted (threshold 0);
however, RAMCP quickly deteriorates with increasing threshold as it is not able to accurately estimate the expected cost 
(only its positivity). T-UCT and CC-POMCP keep the number of satisfied instances, or even improve it, with more risk permitted.

Figure \ref{fig:comparison} compares the \emph{payoffs} collected by T-UCT to other two algorithms,
and only considers the instances satisfied by both of the compared algorithms.
From the first row, it is clear that CC-POMCP is overly pessimistic in its actions, resulting in overconservative behavior.
On the other hand, due to its reward-oriented exploration, RAMCP often finds valuable strategies once it finds feasible ones.
Nevertheless, T-UCT is still able to achieve similar or higher payoffs than RAMCP on all environments.
%
%
%
%

At the same time, T-UCT satisfied many more instances than RAMCP.
The difference is especially notable on the large environments with non-zero cost thresholds where RAMCP
satisifes at most half of the instances while T-UCT steadily achieves between $0.8$ and $1.0$ satisfaction rate.
Regarding their performance in terms of payoffs, there is little statistically significant difference, although T-UCT achieved notably higher payoffs on large \emph{SoftAvoid} instances.

On the small benchmarks, T-UCT is able to identify nearly optimal trade-offs between cost and reward.
On these environments, CC-POMCP both solved fewer instances and achieved lower payoffs than T-UCT.
With larger environments, CC-POMCP's safety performance is on par with T-UCT or better. However, as we can see from the payoff values,
this is mainly caused by the fact that CC-POMCP played extremely conservatively.
The difference is most prominent in the \emph{Manhattan} environment,
where we observe up to a tenfold gap in payoff between CC-POMCP and T-UCT, despite CC-POMCP achieving superior $\sat$ values.
This is caused by CC-POMCP refusing almost all of the maintenance requests, thus failing to find a valuable policy.

Notably, T-UCT was the only algorithm capable of solving a substantial number of \emph{Manhattan} instances (approx. 74\% $\sat_M$, 100\% $\sat_W$) while achieving high payoffs.
As all the algorithms performed at most approx. 500 MCTS samples per step (an order of magnitude less than in other benchmarks, see Table~\ref{tbl:simulations}), this demonstrates T-UCT's data efficiency.

In summary, while CC-POMCP manages to find safe trajectories, it does so at the expense of the collected
rewards. On the other hand, RAMCP generally accumulated a large amount of
reward, although frequently disregarding the safety of its behavior in the process.

\begin{table}[ht!]
    \begin{tabular}{lrrrr}
        \toprule
        Environment & $t$ & \rotatebox{25}{T-UCT (ours)} & \!\!\!\!\!\!\!\!\rotatebox{25}{CC-POMCP} & \!\!\!\!\!\!\!\!\rotatebox{25}{RAMCP} \\
        \midrule
        Avoid S & 10 & 324 & 954 & 2580 \\
        SoftAvoid S & 10 & 268 & 874 & 1920 \\
        Avoid L & 50 & 1017 & 2634 & 1378 \\
        SoftAvoid L & 50 & 1041 & 2587 & 1458 \\
        Manhattan & 500 & 387 & 349 & 427 \\
        \bottomrule
        \end{tabular}
    \caption{
        Average number of simulations per step performed by each algorithm.
        The low number of simulations by T-UCT is due to its higher computational cost during backpropagation.
        The drop in RAMCP's simulations for larger environments is caused by its two-phase nature.
    }
    \label{tbl:simulations}
\end{table}


   



\paragraph{Discussion}
Per our observation, the suboptimal performance of CC-POMCP is caused by two factors:
the unsound update rule, which
occasionally sets \( \thr \) to a value that is not achievable; and the slow convergence of the Lagrange multiplier~$\lambda$. For the latter reason, the algorithm often achieves low payoffs, even if the computed policies are safe.

RAMCP is often able to find relatively valuable strategies once it finds feasible ones, the latter task being its weak point. The problem of greedy exploration is pronounced
in the \emph{SoftAvoid} task, where rewards and costs are not directly correlated, and thus the sampled tree does not provide a good approximation of the original CMDP.
In \emph{Manhattan,} where it is easy to underestimate the costs, the cost budget of RAMCP often explodes as described at the end of Section \ref{sec:tuct},
which essentially leads to ignoring the constraint.

In summary, T-UCT strikes a good balance between both safety and payoff, a
property not shared by either of the baseline algorithms.
Although T-UCT performs significantly fewer simulations than the other algorithms,
it provides stable performance across all environments, thus demonstrating its sample efficiency.
Using Pareto curves, T-UCT stores the collected information in a well structured manner,
allowing it to effectively reason about the cost-reward trade-offs even during exploration and thus find safe and valuable policies.

\section{Conclusion and Future Work} \label{sec:conclu}

We introduced Threshold UTC, a MCTS-based planner for constrained MDPs utilizing online estimates of reward-cost trade-offs in search for safe and valuable policies. We presented an experimental evaluation of our approach, showing that it outperforms competing cost-constrained tree-search algorithms. Our aim for future work is to augment T-UCT with function approximators of Pareto sets, thus obtaining a general MCTS-based \emph{reinforcement learning} algorithm for cost-constrained  optimization.  

\section*{Acknowledgements}

This research was supported by the Czech Science Foundation grant no. GA23-06963S.

\bibliography{main-pareto-uct}

\newpage
\onecolumn
\appendix
\section{Appendix}
\subsection{Necessity to Consider the Action Outcome}
\label{app:ccpomcp-counterexample}
CMDP A in Figure~\ref{fig:cc-pomcp-mdp} with $\thr=0.5$ and $\gamma_c=1$ witnesses the unsoundness of CC-POMCP.
The update rule of CC-POMCP does not take the final state into account as it essentially updates the threshold to the value $\thract$ in T-UCT algorithm plus
adjusts for the immediate cost and the discount factor $\discount_\pen$.
Especially, if CC-POMCP deterministically selects an action $a$,
the threshold update only considers the immediate cost and the discount factor $\discount_\pen$, i.e.,
$$\thr' \gets \frac{\thr - \imcost(h, a)}{\discount_\pen}.$$
In the case of CMPD A, the agent keeps the threshold at $0.5$
regardless of whether the final state of action $a_1$ is $s_2$ or $s_3$. From state $s_2$ onwards,
the agent incurs an expected cost and reward of $0.5$. However, from state $s_3$,
it always incurs a cost of $1$. Therefore, the total expected cost of CC-POMCP from $s_0$ is $0.75$, which violates the constraint.

\begin{figure}[t]
    \centering
    \begin{tikzpicture}[scale=0.7]
        \node[draw, circle] (s0) at (0,1) {$s_0$};
        \node[draw] (a1) at (0,0) {$a_1$};
        \node[draw, circle] (s2) at (-2,-2) {$s_2$};
        \node[draw, circle] (s3) at (1,-2) {$s_3$};
        \node[draw] (a4) at (-3,-3) {$a_4$};
        \node[draw] (a5) at (-1,-3) {$a_5$};
        \node[draw] (a6) at (1,-3) {$a_6$};
        \node[draw, circle] (s7) at (-3,-5) {$s_7$};
        \node[draw, circle] (s8) at (-1,-5) {$s_8$};
        \node[draw, circle] (s9) at (1,-5) {$s_9$};

        \draw[->] (s0) -- (a1);
        \draw[<-, dashed] (s2) -- node[left] {$\substack{p=0.5}$}  (a1);
        \draw[->, dashed] (a1) -- node[right] {$\substack{p=0.5}$} (s3);
        \draw[->] (s2) -- (a4);
        \draw[->, dashed] (a4) -- node[left] {$\substack{\rew=1\\ \pen = 1}$} (s7);
        \draw[->] (s2) -- (a5);
        \draw[->, dashed] (a5) -- node[left] {$\substack{\rew=0\\ \pen = 0}$} (s8);
        \draw[->] (s3) -- (a6);
        \draw[->, dashed] (a6) -- node[left] {$\substack{\rew=0\\ \pen = 1}$} (s9);
    \end{tikzpicture}
    \caption{CMDP A}
    \label{fig:cc-pomcp-mdp}
\end{figure}

\newcommand{\C}{\mathcal{C}}
\newcommand{\poltuct}{\pi_{\text{T-UCT}}}
\newcommand{\etrans}[1]{\hat{\trans}^{#1}}

\subsection{Proofs}
\label{app:proofs}
In what follows, we fix a CMDP $\C$ with dynamics $\trans$, a length of the horizon $T$ we optimize over, and the initial threshold $\thr$.
We further use $\sigma^0, \ldots, \sigma^{T-1}$ to denote the random sequence of single-step policies computed by T-UCT on line \ref{aline:main-act-select} of Algorithm~\ref{alg:pareto-uct},
and $\poltuct$ to denote the policy which is the concatenation of the single-step policies.
The sequence $\thr^0, \ldots, \thr^{T-1}$ denotes the random thresholds computed by T-UCT with $\thr^0=\thr$, and $\hat{\trans}^0, \ldots, \hat{\trans}^{T-1}$ denotes the random sequence of estimates of $\trans$ before individual steps.
Additionally, $H$ denotes the set of all histories of length at most $T$ reachable from the initial state with a positive probability.
Finally, we use $\expect_{\trans'}$ to denote the expected value under particular dynamics $\trans'$.

We define a threshold $\thr'$ to be $\hat{\trans}^\tstep$-feasible from a history $h$ if it is feasible according to the Pareto set estimates of T-UCT at time $t$.
Since the Bellman equations \eqref{eq:bellman-h} and \eqref{eq:bellman-ha} compute precise Pareto sets if the environment follows the dynamics $\hat{\trans}^\tstep$,
the threshold $\thr'$ is $\hat{\trans}^\tstep$-feasible iff there exists a policy that yields $\acost$ at most $\thr'$ under the dynamics $\hat{\trans}^\tstep$.

We begin by proving the property \eqref{eq:thr-update-exp-decrease}:
\begin{lemma}
\label{lem:thr-update-exp-decrease}
Let $h \in H$ be a history of length $\tstep$. The following holds:
\begin{equation*}
    \thr^\tstep \geq \expect_{\etrans{\tstep}}\left[\pen(h, a, s) + \discount_\pen \cdot \thr^{\tstep+1}\mid \sigma^\tstep\right].
\end{equation*}
\end{lemma}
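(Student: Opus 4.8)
The plan is to split the conditional expectation on the right-hand side, via the tower rule, into an expectation over the action $a\sim\sigma^\tstep$ followed by an expectation over the successor $s\sim\etrans{\tstep}(\cdot\mid h,a)$, and then to reduce the whole statement to a bound on a single call of \texttt{UpdateThr}. Abbreviate $\thr := \thr^\tstep$ and $\hat{\trans} := \etrans{\tstep}$. For an action $a$ in the support of $\sigma^\tstep$ let $\thract(a)$ denote the intermediate threshold that \texttt{UpdateThr} picks when $a$ is the sampled action: $\thract(a) = \thr$ when $\sigma^\tstep$ is deterministic, and, when $\sigma^\tstep = \mixdistr(\sigma_l,a_l,\sigma_h,a_h)$, $\thract(a_l) = c_l$ and $\thract(a_h) = c_h$. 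For a possible outcome $s$, write $\thr'(s)$ for the value of $\thr^{\tstep+1}$ produced by \texttt{UpdateThr}$(\thr,\sigma^\tstep,a,s)$ when $s$ is observed. The lemma then reduces to the per-action inequality
\[
 \sum_{s\in\states}\hat{\trans}(s\mid h,a)\,\bigl[\pen(h,a,s) + \discount_\pen\cdot\thr'(s)\bigr] \;\le\; \thract(a) \qquad\text{for every } a \in \supp(\sigma^\tstep),
\]
after averaging over $a\sim\sigma^\tstep$: if $\sigma^\tstep$ is deterministic then $\thract(a) = \thr$ and we are immediately done; if $\sigma^\tstep = \mixdistr(\sigma_l,a_l,\sigma_h,a_h)$, the mixing weights fixed on lines~\ref{aline:mixing-start}--\ref{aline:mixing-end} satisfy $\sigma_l c_l + \sigma_h c_h = \thr$, so the $\sigma^\tstep$-weighted sum of the right-hand sides is again $\thr = \thr^\tstep$. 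The two degenerate branches of \texttt{GetActionDist} (lines~\ref{aline:allbad-start}--\ref{aline:allgood-end}) return a deterministic $\sigma^\tstep$ and so fall under the first case.

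To prove the per-action inequality I would run a case split matching the branches of \texttt{UpdateThr}, using in each branch the decomposition~\eqref{eq:conv-comb} of the relevant vertex of $\paretoest(h,a)$ through $\hat{\trans}$. In the \emph{mixing} branch, \eqref{eq:conv-comb} applied to $(\thract(a),\rho)$ yields points $(c_s,r_s)$ with $\sum_s\hat{\trans}(s\mid h,a)[\discount_\pen c_s + \pen(h,a,s)] = \thract(a)$, and since the update sets $\thr'(s) = c_s$, the left-hand side equals $\thract(a)$ exactly. In the \emph{surplus} branch ($\thract(a) > c_{\max}$), I substitute~\eqref{eq:update-surplus} and combine $\sum_s\hat{\trans}(s\mid h,a)\pen(h,a,s) = \imcost(h,a)$ with \eqref{eq:conv-comb} applied to $(c_{\max},\rho)$---which gives $\discount_\pen\sum_s\hat{\trans}(s\mid h,a)c_s = c_{\max} - \imcost(h,a)$, hence $\discount_\pen\sum_s\hat{\trans}(s\mid h,a)(B - c_s) = \imcost(h,a) + \discount_\pen B - c_{\max}$---so that the redistributed surplus telescopes to exactly $\thract(a) - c_{\max}$ and the left-hand side is once more $\thract(a)$. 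In the \emph{unfeasible} branch ($\thract(a) < c_{\min}$), the update~\eqref{eq:ramcp-update} lowers only the threshold of whichever outcome is actually observed, and summing gives left-hand side $= c_{\min} - |\supp(\hat{\trans}(\cdot\mid h,a))|\cdot(c_{\min}-\thract(a)) \le \thract(a)$, with equality only when the support is a singleton. Finally, on a transition $has$ that has not yet been expanded the update is simply $\thr'(s) = (\thr - \pen(h,a,s))/\discount_\pen$, so $\pen(h,a,s) + \discount_\pen\thr'(s) = \thr$ for each such $s$; one then checks that this subcase can occur only together with $\thract(a) = \thr$, so the bound still holds.

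The step I expect to be the main obstacle is the algebra of the \emph{surplus} branch: one has to verify that the denominator $\imcost(h,a) + \discount_\pen B - c_{\max}$ in~\eqref{eq:update-surplus} is precisely the normalizer $\discount_\pen\sum_s\hat{\trans}(s\mid h,a)(B - c_s)$ coming out of the decomposition of $(c_{\max},\rho)$, since that is exactly what makes the proportional redistribution of the surplus sum back to $\thract(a) - c_{\max}$; the other branches collapse quickly once~\eqref{eq:conv-comb} is available. A secondary delicate point is guaranteeing that~\eqref{eq:conv-comb} is legitimately applicable at every place it is invoked---the invoked points must genuinely lie in $\conv(\paretoest(h,a))$ and all children $\paretoest(has)$ must be present---which is why the ``not yet expanded'' subcase must be isolated and handled by the plain bound rather than by an equality, and why one has to argue that a genuine two-action mixture is never built out of Pareto estimates whose supporting subtrees are still incomplete.
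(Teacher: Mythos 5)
Your proposal is correct and follows essentially the same route as the paper's proof: reduce to the per-action inequality via $\expect[\thract\mid\sigma^\tstep]=\thr^\tstep$, then verify it branch by branch using the decomposition~\eqref{eq:conv-comb}, with exact equality in the ``mixing'' and ``surplus'' cases and the bound $c_{\min}-k(c_{\min}-\thract)\leq\thract$ in the ``unfeasible'' case. Your explicit treatment of the not-yet-expanded transition is a small addition the paper's proof leaves implicit, but it does not change the argument.
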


\begin{proof}
    For brevity, we use $\sigma := \sigma^\tstep$, and $\etrans{} := \etrans{\tstep}$.
    We first show that $\expect[\thract \mid \sigma] = \thr^\tstep$. If $\sigma$ is a deterministic policy,
    then T-UCT sets $\thract = \thr$ by the definition of $\thract$, and the claim follows.
    Otherwise, $\thract$ is equal to $c_l$ or $c_h$ with probability $\sigma_l$ or $\sigma_h$,
    respectively, where the probabilities are computed precisely to satisfy $\thr^\tstep = \sigma_l c_l + \sigma_h c_h$.
    It follows that $\expect[\thract \mid \sigma] = \thr^\tstep$.

    Therefore, it is enough to show that for every $a$ in the support of $\sigma^\tstep$, we have
    \begin{equation}
        \label{eq:act-thr-ineq}
        \thract \geq \expect_{\etrans{}}\left[\pen(h, a, s) + \discount_\pen \cdot \thr^{\tstep+1}\mid a \right].
    \end{equation}
In the ``mixing'' case, the new threshold $\thr^{\tstep+1}$ is equal to
one of $c_s$ from equation \eqref{eq:conv-comb}, which are chosen so as to satisfy \eqref{eq:act-thr-ineq}.

In the ``surplus'' case, $\sigma$ is the deterministic distribution corresponding to the choice of action $a$,
$\thract$ is equal to $\thr$, and $\thr^{\tstep+1}$ is set to
\begin{align}
    c_s + \left(\thract - c_{\max}\right)\frac{B - c_s}{\imcost(h, a) + \gamma_\pen B - c_{\max}}.
\end{align}
Recall the definition $\imcost(h, a) = \sum_{s} \hat{\trans}(s \mid h, a)\cdot \pen(h, a, s)$,
and note that by the definition of $c_s$ right above equation \eqref{eq:update-surplus}, the expectation of $c_s$ under $\hat{\trans}(s\mid h,a)$ is $$\expect_{\etrans{}}[c_s] = \frac{c_{\max} - \imcost(h, a)}{\gamma_\pen}.$$
It follows that
\begin{align*}
    &\expect_{\etrans{}}\left[ \pen(h,a,s) + \gamma_\pen\thr^{\tstep+1} \mid a\right] = \\
    &\imcost(h, a) + \gamma_\pen \expect_{\etrans{}}\left[c_s + \frac{\left(\thract - c_{\max}\right)\left(B - c_s\right)}{\imcost(h, a) + \gamma_\pen B - c_{\max}}\right] = \\
    &c_{\max} + \gamma_\pen \left(\thract - c_{\max}\right)\frac{B - \expect_{\etrans{}}\left[c_s\right]}{\imcost(h, a) + \gamma_\pen B - c_{\max}} = \\
    &c_{\max} + \left(\thract - c_{\max}\right)\frac{\gamma_c B - \left(c_{\max} - \imcost(h, a)\right)}{\imcost(h, a) + \gamma_\pen B - c_{\max}} = \thract.
\end{align*}

Finally, in the ``unfeasible'' case, we have
$$\thr^{\tstep+1} = c_s - \frac{c_{\min} - \thract}{\hat{\trans}(s \mid h, a) \gamma_\pen},$$
and by definition of $c_s$ below equation \eqref{eq:ramcp-update}, we have
$$\expect_{\etrans{}}[c_s \mid a] = \frac{c_{\min} - \imcost(h, a)}{\gamma_\pen}.$$
Hence
\begin{align*}
    &\expect_{\etrans{}}\left[ \pen(h,a,s) + \gamma_\pen\thr^{\tstep+1} \mid a\right] = \imcost(h, a) + \expect_{\etrans{}}\left[\gamma_\pen c_s - \frac{c_{\min} - \thract}{\hat{\trans}(s \mid h, a)}\right] = c_{\min} - k(c_{\min} - \thract) \leq \thract,
\end{align*}
where $k := \left|supp(\hat{\trans}(s \mid h, a))\right| \geq 1$ is the number of states with non-zero probability from $h$ under action $a$.
The inequality follows from the fact that $\thract < c_{\min}$.
\end{proof}

By further analysis of the formulas \eqref{eq:conv-comb} and \eqref{eq:update-surplus}, we can prove the property \eqref{eq:thr-update-feasible}:
\begin{lemma}
    \label{lem:thr-update-feasible}
    Let $h\in H$ be a history of length $\tstep$, and $c_s$ be as in the description of \texttt{UpdateThr} in Algorithm~\ref{alg:pareto-uct}.
    The following holds for every outcome $s$ of playing action $a$ in $h$:
    If $\thr^\tstep$ is $\etrans{}$-feasible from $h$, then $\thr^{\tstep+1} \geq c_s$; otherwise, $\thr^{\tstep+1} < c_s$.
\end{lemma}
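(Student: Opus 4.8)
I would prove the two implications by running through exactly the three-way case split that \texttt{UpdateThr} itself uses (``mixing'', ``surplus'', ``unfeasible''): in each branch one reads $\thr^{\tstep+1}$ off the closed-form update and checks the sign of $\thr^{\tstep+1}-c_s$. The only genuinely non-routine ingredient is a preliminary bookkeeping step, namely that the hypothesis ``$\thr^\tstep$ is $\etrans{\tstep}$-feasible from $h$'' corresponds \emph{exactly} to \texttt{UpdateThr} taking the ``mixing'' or the ``surplus'' branch for the action $a$ that is actually sampled from $\sigma^\tstep$, while infeasibility of $\thr^\tstep$ forces the ``unfeasible'' branch. Given that, the lemma is three short sign computations.

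\textbf{The bookkeeping step.} I would unfold \texttt{GetActionDist}$(h,\thr^\tstep,0)$; since the exploration bonus is disabled ($e=0$), $\widetilde{\paretoest}=\paretoest(h)$ by~\eqref{eq:bellman-h}. Write $c_{\min}(h),c_{\max}(h)$ for the extreme cost-coordinates of $\paretoest(h)$ and $c_{\min}(h,a),c_{\max}(h,a)$ for those of $\paretoest(h,a)$. By the discussion preceding Lemma~\ref{lem:thr-update-exp-decrease}, $\thr^\tstep$ is $\etrans{\tstep}$-feasible from $h$ iff $c_{\min}(h)\le\thr^\tstep$, i.e. iff $\widetilde{C}^-\neq\emptyset$. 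Now check the three sub-cases of \texttt{GetActionDist}: if $\widetilde{C}^-=\emptyset$ then $\sigma^\tstep$ is deterministic on the cost-minimising action $a$ and $\thract=\thr^\tstep<c_{\min}(h)=c_{\min}(h,a)$, the ``unfeasible'' branch; if $\widetilde{C}^+=\emptyset$ then $\sigma^\tstep$ is deterministic on the payoff-maximising $a$ and $\thract=\thr^\tstep>c_{\max}(h)\ge c_{\max}(h,a)$, the ``surplus'' branch (and $\thr^\tstep$ is feasible); otherwise $\sigma^\tstep$ mixes $a_l,a_h$ and, whichever is sampled, the associated $\thract\in\{c_l,c_h\}$ is the cost-coordinate of a vertex of $\paretoest(h)$, which — as every vertex of $\prune(\bigcup_{a'}\paretoest(h,a'))$ is a vertex of some $\paretoest(h,a')$ — lies in $\paretoest(h,a)$ for the sampled $a$; hence $c_{\min}(h,a)\le\thract\le c_{\max}(h,a)$, the ``mixing'' branch (and $\thr^\tstep$ is feasible). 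This yields the stated correspondence.

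\textbf{The sign checks.} Fix an outcome $s\in\supp(\hat\trans(\cdot\mid h,a))$. In the ``mixing'' branch $\thr^{\tstep+1}$ is by construction the cost-coordinate $c_s$ of the point of $\conv(\paretoest(has))$ used in the convex combination~\eqref{eq:conv-comb} realising $(\thract,\rho)$, so $\thr^{\tstep+1}=c_s\ge c_s$. In the ``surplus'' branch, from~\eqref{eq:update-surplus}, $\thr^{\tstep+1}=c_s+(\thract-c_{\max})\cdot\frac{B-c_s}{\imcost(h,a)+\gamma_\pen B-c_{\max}}$; here $\thract-c_{\max}>0$ by the branch condition, $B-c_s\ge0$ since $c_s$ is a vertex cost of $\paretoest(has)$ and $B$ upper-bounds any achievable accumulated cost, and the denominator is nonnegative since applying~\eqref{eq:conv-comb} to $(c_{\max},\rho)$ gives $c_{\max}=\imcost(h,a)+\gamma_\pen\sum_s\hat\trans(s\mid h,a)\,c_s\le\imcost(h,a)+\gamma_\pen B$; hence the correction term is $\ge0$ and $\thr^{\tstep+1}\ge c_s$. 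In the ``unfeasible'' branch, from~\eqref{eq:ramcp-update}, $\thr^{\tstep+1}=c_s-\frac{c_{\min}-\thract}{\hat\trans(s\mid h,a)\gamma_\pen}$, where $c_{\min}-\thract>0$ by the branch condition, $\hat\trans(s\mid h,a)>0$ because $s$ is a genuine outcome, and $\gamma_\pen>0$; the subtracted term is strictly positive, so $\thr^{\tstep+1}<c_s$. Combining the sign checks with the bookkeeping step gives the lemma.

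\textbf{Main obstacle.} I expect the bookkeeping step to be the crux — in particular confirming that the mixing vertices $c_l,c_h$ are realised \emph{within} $\conv(\paretoest(h,a_l))$, $\conv(\paretoest(h,a_h))$, so that the sampled action genuinely enters \texttt{UpdateThr}'s ``mixing'' branch, and pinning down that the $c_s$ of~\eqref{eq:conv-comb} are Pareto-vertex costs (hence bounded by $B$), which is what makes the ``surplus'' sign check go through. A couple of degenerate situations (the denominator $\imcost(h,a)+\gamma_\pen B-c_{\max}$ vanishing; or $hat$ not yet expanded, in which case \texttt{UpdateThr} merely subtracts the immediate cost and divides by $\gamma_\pen$ and no $c_s$ is defined) will each need a separate sentence but do not affect the substance.
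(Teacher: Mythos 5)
Your proof follows essentially the same route as the paper's: a three-way case split over the \texttt{UpdateThr} branches, with the ``mixing'' case giving $\thr^{\tstep+1}=c_s$ exactly, the ``surplus'' case adding a nonnegative correction term (positive numerator factors and a denominator bounded below via $c_{\max}=\imcost(h,a)+\gamma_\pen\expect[c_s]\le\imcost(h,a)+\gamma_\pen B$), and the ``unfeasible'' case subtracting a strictly positive term since $\thract<c_{\min}$. Your ``bookkeeping step'' spelling out why feasibility of $\thr^\tstep$ lands you in the first two branches is a welcome elaboration of a correspondence the paper simply asserts, and does not change the substance of the argument.
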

\begin{proof}
    If $\thr^\tstep$ is $\etrans{}$-feasible, then we are in one of the cases ``mixing'' or ``surplus''.
    In the former case, $\thr^{\tstep+1}$ is directly equal to $c_s$.
    In the latter case, $\thr^{\tstep+1}$ is equal to
    \begin{align*}
        c_s + \left(\thract - c_{\max}\right)\frac{B - c_s}{\imcost(h, a) + \gamma_\pen B - c_{\max}}.
    \end{align*}
    Note that all three terms in the second summand are positive, hence indeed $\thr^{\tstep+1} \geq c_s$.
    
    Finally, if $\thr^\tstep$ is $\etrans{}$\nobreakdash-unfeasible, then $\thr^{\tstep+1}$ is equal to
    $$
    c_s - \frac{c_{\min} - \thract}{\hat{\trans}(s \mid h, a) \gamma_\pen},
    $$
    where $c_{\min}$ is the minimal cost achievable from $h$ according to $\etrans{}$.
    Since $\thract$ is unfeasible, $\thract < c_{\min}$, thus the second term is positive.
\end{proof}

\newcommand{\am}{a_{\min}}
\newcommand{\dist}{d}

We now prove that the estimates $\hat{\trans}$ can get arbitrarily close to the real dynamics $\trans$ with high probability,
provided that T-UCT is given sufficiently many MCTS iterations per step.

\begin{lemma}
    \label{lem:dyn-close}
    For every $p \in [0, 1)$ and $\dist > 0$, there exists $N$ such that with probability at least $p$,
    the $L^1$-distance between the real dynamics $\trans$ and the estimated dynamics $\etrans{\tstep}$ is less than $\dist$ for every $t$,
    provided T-UCT is given $N$ MCTS-iterations per step.
\end{lemma}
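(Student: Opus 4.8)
The plan is to split the claim into two independent pieces and combine them with a union bound. The first piece is a \emph{concentration} statement: if a fixed state--action pair $(s,a)$ has been sampled from $\trans(-\mid s,a)$ at least $m$ times during the search, then for $m$ large its empirical frequency vector is $L^1$-close to $\trans(-\mid s,a)$ with overwhelming probability. The second piece is an \emph{exploration} statement: with enough MCTS iterations per step, every state--action pair that the estimate $\etrans{\tstep}$ ever records has, with high probability, been sampled at least $m$ times before step $\tstep$. Chaining these over the finitely many relevant pairs and the $T$ decision steps yields the lemma.

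For the concentration piece, I would first note that for a fixed $(s,a)$ the successor states observed at $(s,a)$ over the whole run form an i.i.d.\ sequence distributed as $\trans(-\mid s,a)$: although the number of visits to $(s,a)$ and the times at which they occur are random and depend on previously observed successors and on the tree statistics, each individual draw is produced by the simulator independently of everything preceding it, so the observed sequence is an i.i.d.\ stream read at adapted times. Conditioning on $N(s,a)\ge m$ and applying a standard exponential tail bound for the $L^1$ deviation of an empirical distribution over $\le|\states|$ outcomes (e.g.\ the Bretagnolle--Huber--Carol inequality), there is $m_0=m_0(\dist',\eta)$ with $\|\etrans{\tstep}(-\mid s,a)-\trans(-\mid s,a)\|_1<\dist'$ of probability $\ge 1-\eta$ whenever $N(s,a)\ge m_0$. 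Since $\trans$ is Markovian there are at most $|\states|\cdot|\act|$ pairs, of which only the ones actually visited can (and need to) be controlled; choosing $\dist'$ so that the aggregate $L^1$ error over all visited pairs is below $\dist$ and $\eta$ so that $|\states|\,|\act|\,T\,\eta\le(1-p)/2$, a union bound over pairs and over $\tstep\in\{0,\dots,T-1\}$ shows that, conditioned on every visited pair having been sampled $\ge m_0$ times by step $\tstep$, the claimed $L^1$ bound holds for all $\tstep$ with probability $\ge 1-(1-p)/2$.

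For the exploration piece, I would establish the T-UCT analogue of the Kocsis--Szepesv\'ari property~\cite{KS06} that every action of every tree node is selected without bound as the iteration count grows. The search-phase rule uses the inflated frontier obtained by shifting each $\paretoest(h,a')$ by $\uct(h,a')\cdot(-1,1)$ with
\[
\uct(h,a')=C\cdot\alpha(h)\cdot\sqrt{\tfrac{\log N(h)}{N(h,a')+1}},
\]
which tends to $+\infty$ whenever $N(h,a')$ stays bounded while $N(h)\to\infty$; since rewards and costs are bounded and the horizon is finite, all genuine (un-inflated) Pareto vertices stay in a fixed compact set. In each of the three branches of \texttt{GetActionDist} this forces an under-sampled action to be taken: in the ``all-infeasible'' branch its inflated minimal cost tends to $-\infty$ and becomes the smallest; in the ``all-feasible'' branch its inflated maximal reward tends to $+\infty$ and becomes the largest; and in the ``mixing'' branch its widely-shifted curve eventually supplies the vertices of $\widetilde{\paretoest}(h)$ adjacent to the threshold. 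An induction on the depth of the search tree (using that the current root is visited on every iteration of its step and that an unexpanded child of an expanded, visited node is added on the next visit, line~\ref{aline:expansion}) then gives $N(h,a)\to\infty$ for every tree node $h$ and action $a$. Hence there is $N_0$ such that with $N\ge N_0$ iterations per step every pair visited by step $\tstep$ is sampled $\ge m_0$ times at every step with probability $\ge 1-(1-p)/2$; taking $N=\max(N_0,m_0)$ and combining the two pieces by a union bound yields total probability $\ge p$.

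The main obstacle is the exploration piece. Unlike vanilla UCT, \texttt{GetActionDist} never simply maximizes an inflated scalar value: it either takes one of two threshold-dependent deterministic branches or mixes two threshold-bracketing actions, so the ``every action infinitely often'' argument must be re-derived branch by branch and shown to survive the interaction between branches as the counts $N(h,\cdot)$ evolve, as well as the fact that the root moves along the (random) played trajectory across the $T$ steps, so the set of tree nodes is itself random. A secondary technical point is the i.i.d.-at-adapted-times claim used in the concentration piece; it is routine but deserves an explicit justification because the visit counts are data-dependent.
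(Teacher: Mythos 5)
Your proposal is correct and follows essentially the same route as the paper's proof: an exploration argument (induction on tree depth showing every node--action pair is selected unboundedly often, driven by the $\sqrt{\log N(h)/(N(h,a)+1)}$ bonus) combined with concentration of the empirical transition frequencies and a union bound over the $T$ steps. Your branch-by-branch analysis of \texttt{GetActionDist} and the explicit tail bound are more careful treatments of steps the paper compresses into the one-line claim that each action is selected $\Theta(\log N(h))$ times and an appeal to the law of large numbers.
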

\begin{proof}
    We first analyze a single MCTS phase and show that after sufficiently many samples,
    T-UCT visits each history in $H$ arbitrarily many times with probability close to one.
    It is enough to show that if T-UCT performed infinitely many MCTS iterations,
    it would visit each history $h \in H$ infinitely many times almost surely (a.s.), i.e., with probability one.
    The latter claim can be shown by a simple induction on the length of the history $|h|$.
    By the assumption, the root is visited infinitely many times.
    Moreover, since there are finitely many actions and the payoffs are bounded,
    the number of times an action $a$ is selected in a history $h$ is in $\Theta(\log N(h))$ due to the exploration bonus,
    where $N(h)$ is the number of times the history $h$ has been visited during the MCTS phase so far.
    Thus if $h$ is visited infinitely many times a.s., each action in $h$ is selected infinitely many times a.s.
    Therefore, also every outcome $has \in H$ is sampled infinitely many times a.s.
    We conclude that for any $p' \in [0, 1)$ and $M$, there exists $N$ such that with probability at least $p$, after $N$ iterations of the MCTS phase,
    each history $h \in H$ is visited at least $M$ times.

    Due to the law of large numbers, the estimate $\etrans{\tstep}$ of the transition function tends to the real transition function $\trans$ with high probability
    as the number of samples $M$ grows. Hence, given the claim proved in the previous paragraph,
    for every probability $p' \in [0, 1)$, there exists $N$ such that
    with probability at least $p'$, the estimates $\etrans{\tstep}$ are $\dist$-close (in $L^1$ metric) to $\trans$ after $N$ iterations of the MCTS phase.
    
    Finally, if $N$ is the number from previous paragraph with the choice $p':=1-\frac{1-p}{T}$, the union bound guarantees that with probaility at least $p$,
    all the sample estimate $\hat{\trans}^\tstep$ are $\dist$\nobreakdash-close to $\trans$.
\end{proof}

\newcommand{\oeps}{\mathcal{O}(\varepsilon)}

The last ingredient is showing that once the estimates $\hat{\trans}$ are close to the real dynamics $\trans$,
the policy computed by T-UCT is $\varepsilon$-close to feasible.

\begin{lemma}
    \label{lem:thr-achieved}
    Let $\varepsilon \in \reals_{>0}$  and $h \in H$ be a history of length $\tstep$.
    Further, assume the estimates $\etrans{\tstep}$ are at most $\dist$-close to $\trans$ in the $L^1$ metric.
    Then for sufficiently small value of $\dist$, the following holds $$\acost_{\poltuct}(h) \leq \max\{\thr^\tstep, \thr^\tstep_{\min}\} + \frac{T-\tstep}{T}\varepsilon,$$
    where $\thr^\tstep_{\min}$ is the minimal cost achievable from $h$ under $\trans$.
\end{lemma}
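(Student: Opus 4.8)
The plan is a \emph{backward induction on the history length} $\tstep$, from $\tstep=\hor$ down to $\tstep=0$: this is the recursion behind the classical simulation lemma, but applied to the cost of the randomized policy $\poltuct$ and measured against the \emph{running} threshold $\thr^\tstep$ (clamped from below by the minimal achievable cost) rather than a fixed bound. Before starting the induction I would record two facts that hold in the same regime as the hypothesised $\dist$-closeness and are produced by the same argument as in the proof of Lemma~\ref{lem:dyn-close}: after enough MCTS iterations the search tree is, with the relevant probability, expanded deeply enough that $\paretoest(h)$ and $\paretoest(h,a)$ coincide on all of $H$ with the \emph{exact} Pareto sets under the current estimate $\etrans{\tstep}$ — so in particular every $c_s$ used by \texttt{UpdateThr} is a cost genuinely achievable from $has$ under $\etrans{\tstep}$; and the running thresholds $\thr^\tstep$ stay in an interval $[-R,R]$ with $R$ depending only on $\cmdp$ and $\hor$, which needs a short separate check since the ``surplus'' and ``unfeasible'' updates~\eqref{eq:update-surplus},\eqref{eq:ramcp-update} divide by $\imcost(h,a)+\discount_\pen B-c_{\max}$ and by $\etrans{\tstep}(s\mid h,a)\discount_\pen$ (the latter being bounded away from $0$ precisely because $\etrans{\tstep}$ is $\dist$-close to the fixed $\trans$).

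The base case $\tstep=\hor$ is immediate: no decision remains, $\acost_{\poltuct}(h)=0$, and the right-hand side is non-negative. For the inductive step, fix $h$ with $|h|=\tstep<\hor$ and unfold one move,
\[
 \acost_{\poltuct}(h)=\expect_{a\sim\sigma^\tstep,\ s\sim\trans(\cdot\mid h,a)}\bigl[\pen(h,a,s)+\discount_\pen\cdot\acost_{\poltuct}(has)\bigr],
\]
apply the induction hypothesis to each successor $has$, and split on whether $\thr^\tstep$ is $\etrans{\tstep}$-feasible from $h$, exactly as in Lemmas~\ref{lem:thr-update-exp-decrease} and~\ref{lem:thr-update-feasible}. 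In the \emph{feasible} case (``mixing''/``surplus''), Lemma~\ref{lem:thr-update-feasible} gives $\thr^{\tstep+1}\ge c_s$, and $c_s$ is achievable from $has$ under $\etrans{\tstep}$, hence — by a one-sided simulation-lemma estimate comparing $\etrans{\tstep}$ with $\trans$ — at least $\thr^{\tstep+1}_{\min}(has)-\mathcal{O}(\hor\dist)$; therefore $\max\{\thr^{\tstep+1},\thr^{\tstep+1}_{\min}(has)\}\le\thr^{\tstep+1}+\mathcal{O}(\hor\dist)$ and the induction hypothesis at $has$ upgrades to $\acost_{\poltuct}(has)\le\thr^{\tstep+1}+\mathcal{O}(\hor\dist)+\tfrac{\hor-\tstep-1}{\hor}\varepsilon$. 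Substituting and taking the expectation, the surviving leading term is $\expect_{\trans}[\pen(h,a,s)+\discount_\pen\thr^{\tstep+1}\mid\sigma^\tstep]$; Lemma~\ref{lem:thr-update-exp-decrease} bounds the \emph{same} quantity taken under $\etrans{\tstep}$ by $\thr^\tstep$, and since the integrand is bounded (by the boundedness of the running thresholds) replacing $\expect_{\trans}$ with $\expect_{\etrans{\tstep}}$ costs only $\mathcal{O}(\hor\dist)$. Collecting the error terms gives $\acost_{\poltuct}(h)\le\thr^\tstep+\tfrac{\hor-\tstep-1}{\hor}\varepsilon+\mathcal{O}(\hor\dist)$, so choosing $\dist$ so small that the $\mathcal{O}(\cdot)$ term is at most $\varepsilon/\hor$ closes this case (using $\thr^\tstep\le\max\{\thr^\tstep,\thr^\tstep_{\min}\}$).

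In the \emph{infeasible} case (``unfeasible''), $\sigma^\tstep$ deterministically plays the action minimizing the estimated future cost, so it realises the minimal cost from $h$ under $\etrans{\tstep}$; inspecting~\eqref{eq:ramcp-update} with the $c_s$ read off the min-cost vertex of $\paretoest(h,a)$ shows that $c_s$ equals the minimal cost from $has$ under $\etrans{\tstep}$ and that $\thr^{\tstep+1}=c_s-\tfrac{c_{\min}-\thract}{\etrans{\tstep}(s\mid h,a)\discount_\pen}$ lies strictly below it — i.e. the threshold \emph{stays} estimated-infeasible along every branch. Hence, up to an $\mathcal{O}(\hor\dist)$ discrepancy between $\etrans{\tstep}$- and $\etrans{\tstep+1}$-infeasibility that only perturbs constants, the whole continuation of $\poltuct$ from $h$ is the ``play the estimated-cost-minimising action'' policy; evaluated under $\trans$, the simulation lemma gives $\acost_{\poltuct}(h)\le(\text{minimal cost from }h\text{ under }\etrans{\tstep})+\mathcal{O}(\hor\dist)\le\thr^\tstep_{\min}+\mathcal{O}(\hor\dist)\le\max\{\thr^\tstep,\thr^\tstep_{\min}\}+\varepsilon/\hor$ for $\dist$ small.

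In both cases the per-step slack is $\mathcal{O}(\hor\dist)$ with a constant depending only on $\cmdp$ and $\hor$ (not on $h$ or $\dist$), and $H$ is finite, so a single sufficiently small $\dist$ makes all of these at most $\varepsilon/\hor$; summing the slacks along the remaining $\hor-\tstep$ steps of the induction produces exactly the claimed $\tfrac{\hor-\tstep}{\hor}\varepsilon$ term, and this collection of $\mathcal{O}(\hor\dist)$ errors is precisely what ``for sufficiently small $\dist$'' in the statement absorbs. The backward induction itself is routine; the main obstacle is the careful control of how the $L^1$-estimation error propagates across the horizon while the two threshold-update regimes behave so differently — concretely, (i) establishing that once a branch becomes estimated-infeasible the policy remains cost-minimising (so $\acost_{\poltuct}$ never overshoots $\thr^\tstep_{\min}$ by more than the estimation error), and (ii) establishing the a-priori boundedness of the running thresholds, which is what legitimises interchanging $\expect_{\trans}$ and $\expect_{\etrans{\tstep}}$ with only an $\mathcal{O}(\hor\dist)$ penalty.
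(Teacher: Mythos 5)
Your proposal follows essentially the same route as the paper's proof: backward induction on $\tstep$ from $T$ down to $0$, a one-step unfolding of $\acost_{\poltuct}(h)$, a case split on whether $\thr^\tstep$ is $\etrans{\tstep}$-feasible, with Lemma~\ref{lem:thr-update-exp-decrease} supplying the expected-threshold inequality (transferred from $\etrans{\tstep}$ to $\trans$ at an $O(\dist)$ price) and Lemma~\ref{lem:thr-update-feasible} supplying $\thr^{\tstep+1}\geq c_s$ resp.\ $\thr^{\tstep+1}<c_s$, and with the per-step slack budgeted so that the total is $\frac{T-\tstep}{T}\varepsilon$. Your feasible case matches the paper's almost line for line, and your explicit preliminary remarks (converged Pareto estimates, a priori boundedness of the running thresholds to justify swapping $\expect_{\trans}$ and $\expect_{\etrans{\tstep}}$) make precise things the paper leaves implicit.

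The one place you deviate is the unfeasible case, and there your standalone argument is shakier than the paper's. You claim the threshold ``stays estimated-infeasible along every branch'' so that the entire continuation of $\poltuct$ is the estimated-cost-minimising policy, and then invoke a simulation lemma over the whole remaining horizon. But infeasibility at step $\tstep+1$ is judged against the \emph{refreshed} estimates $\etrans{\tstep+1}$, and the margin $\frac{c_{\min}-\thract}{\etrans{\tstep}(s\mid h,a)\gamma_\pen}$ by which $\thr^{\tstep+1}$ undershoots $c_s$ can be arbitrarily small; a small downward shift in the next step's cost estimates can flip the branch back to feasible, after which the policy is no longer cost-minimising and your direct argument does not apply. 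This detour is unnecessary: the induction hypothesis you already invoked, combined with the chain $\thr^{\tstep+1}<c_s\leq \thr^{\tstep+1}_{\min}+O(\dist)$ (so that $\max\{\thr^{\tstep+1},\thr^{\tstep+1}_{\min}\}\leq \thr^{\tstep+1}_{\min}+O(\dist)$), already yields $\acost_{\poltuct}(has)\leq \thr^{\tstep+1}_{\min}+O(\dist)+\frac{T-\tstep-1}{T}\varepsilon$ regardless of whether the branch later becomes feasible — which is exactly how the paper closes this case. With that repair your argument is the paper's.
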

\begin{proof}
    We prove the claim by induction on the length of the history $\tstep$ in the decreasing order.
    The base case $\tstep = T$ is trivial since there is only one policy --- the empty policy.
    In that case $\acost_{\poltuct}(h) = 0 = \thr^\tstep_{\min}$.

    Now suppose the claim holds for all histories of length $\tstep + 1$.
    Let $h$ be a history of length $\tstep$ and $\thr^\tstep$ be a threshold.
    Recall that $\poltuct$ is the policy computed by T-UCT.
    
    By Lemma \ref{lem:thr-update-exp-decrease}, $\thr^{\tstep+1}$ satisfies
    \begin{equation*}
        \thr^\tstep \geq \expect_{\etrans{\tstep}}\left[\pen(h, a, s) + \discount_\pen \cdot \thr^{\tstep+1} \mid \poltuct\right].
    \end{equation*}
    For sufficiently small $\dist$, the inequality also holds for the real transition function $\trans$, up to a small error:
    \begin{equation}
        \label{eq:lem:thr-achieved-1}
        \thr^\tstep+\frac{1}{2T}\varepsilon \geq \expect_{\trans}\left[\pen(h, a, s) + \discount_\pen \cdot \thr^{\tstep+1} \mid \poltuct \right].
    \end{equation}
    By the induction hypothesis, it holds that
    $$\acost_{\poltuct}(has) \leq \max\{\thr^{\tstep+1}, \thr^{\tstep+1}_{\min}\} + \frac{T-\tstep-1}{T}\varepsilon.$$
    We distinguish two cases based on whether $\thr^\tstep$ is $\etrans{\tstep}$-feasible from $h$.
    
    If $\thr^\tstep$ is $\etrans{\tstep}$-feasible, then $\thr^{\tstep+1}$ is greater or equal to $c_s$ (by Lemma \ref{lem:thr-update-feasible}), which, for a sufficiently small $\dist$, is greater than $\thr^{\tstep+1}_{\min} - \frac{1}{2T}\varepsilon$ (since $c_s$ is $\etrans{t}$-feasible by its definition above equation \eqref{eq:update-surplus}).
    Therefore, we have
    \begin{equation}
        \label{eq:lem:thr-achieved-2}
        \acost_{\poltuct}(has) \leq \thr^{\tstep+1} + \frac{2T-2\tstep-1}{2T}\varepsilon.
    \end{equation}
    So by \eqref{eq:lem:thr-achieved-1} and \eqref{eq:lem:thr-achieved-2} we have
    \begin{align*}
        & \thr^\tstep+\frac{1}{2T}\varepsilon \geq \expect_{\trans}\left[\pen(h, a, s) + \discount_\pen \cdot \thr^{\tstep+1}\mid \poltuct \right] \geq \\
        & \expect_{\trans}\left[\pen(h, a, s) + \discount_\pen \cdot (\acost_{\poltuct}(has) - \frac{2T-2\tstep-1}{2T}\varepsilon)\mid \poltuct \right] \geq \\
        & \expect_{\trans}\left[\pen(h, a, s) + \discount_\pen \cdot \acost_{\poltuct}(has)\mid \poltuct \right] - \frac{2T-2\tstep-1}{2T}\varepsilon,
    \end{align*}
    which yields
    \begin{align*}
     & \acost_{\poltuct}(h) = \expect_{\trans}\left[\pen(h, a, s) + \discount_\pen \cdot \acost_{\poltuct}(has) \mid \poltuct \right] \leq \thr^\tstep + \frac{2T-2\tstep}{2T}\varepsilon.
    \end{align*}

    If $\thr^\tstep$ is $\etrans{\tstep}$\nobreakdash-unfeasible from $h$, then, for a sufficiently small value of $\dist$, it holds that
    \begin{equation}
        \label{eq:lem:thr-achieved-3}
        \thr^\tstep \leq \thr^\tstep_{\min} + \frac{1}{4T}\varepsilon.
    \end{equation}
    Further, by Lemma \ref{lem:thr-update-feasible}, we have $\thr^{\tstep+1} < c_s$.
    As $\thr^{\tstep}$ is $\etrans{\tstep}$\nobreakdash-unfeasible, $c_s$ is the minimal cost achievable from $has$ under $\etrans{\tstep}$ (by definition of $c_s$ below equation \eqref{eq:ramcp-update}).
    Therefore, for sufficiently small $\dist$, $c_s \leq \thr^{\tstep+1}_{\min} + \frac{1}{4T}\varepsilon$.
    Altogether, we obtain $\thr^{\tstep+1} < c_s \leq \thr^{\tstep+1}_{\min} + \frac{1}{4T}\varepsilon$.
    By combining the latter inequality with the induction hypothesis, we derive
    \begin{equation}
        \label{eq:lem:thr-achieved-4}
        \acost_{\poltuct}(has) \leq \thr^{\tstep+1}_{\min} + \frac{4T-4\tstep - 3}{4T}\varepsilon.
    \end{equation}
    Analogously to the previous case, by combinding \eqref{eq:lem:thr-achieved-3}, \eqref{eq:lem:thr-achieved-1}, and \eqref{eq:lem:thr-achieved-4}, we derive
    \begin{align*}
        \acost_{\poltuct}(h) \leq \thr^\tstep_{\min} + \frac{4T-4\tstep}{4T}\varepsilon
    \end{align*}

\end{proof}

Finally, we are ready to prove Theorem~\ref{thm:convergence}.
\begin{proof}
    Let $C = T \cdot \max_{s, a} c(s, a)$ be the maximal cost achievable in the CMDP $\C$, and let $p = 1- \frac{\varepsilon}{2C}$.
    Further let $\dist$ be such that $\eps$ in Lemma \ref{lem:thr-achieved} is smaller than $\frac{\varepsilon}{2}$.
    By Lemma \ref{lem:dyn-close}, there exists $N$ such that with probability at least $p$,
    all dynamics estimates $\hat{\trans}^\tstep$ are $\dist$-close to $\trans$ after $N$ MCTS iterations per step.
    The expected cost $\acost_{\poltuct}(s_0)$ is then at most
    $$p \cdot \left(\thr^0 + \frac{\varepsilon}{2}\right) + (1-p)C \leq \thr^0 + \frac{\varepsilon}{2} + \frac{\varepsilon}{2} = \thr + \varepsilon.$$
\end{proof}

\newpage
\subsection{Experiment Configurations}
\label{app:experiment-configs}
\begin{table}[t]
    \centering
    \renewcommand{\arraystretch}{1.5}
    \begin{tabular}{cccc}
    \toprule
    \textbf{Task} & \textbf{Dataset} & \textbf{Horizon} ($T$) & \textbf{Parameter Settings} \\
    \midrule
    \multirow{4}{*}{Avoid} & \texttt{GridworldSmall} & 100 & \begin{tabular}[c]{@{}c@{}} \hspace{0.2cm} $\thr \in \{0, 0.15, 0.35\}$ \\ $\trp \in \{0.2, 0.5\}$ \\ $\slp \in \{0, 0.2\}$ \hspace{0.2cm} \end{tabular} \\ \cline{2-4} 
     & \texttt{GridworldLarge} & 200 & \begin{tabular}[c]{@{}c@{}} \hspace{0.2cm} $\thr \in \{0, 0.15, 0.3\}$ \\ $\trp \in \{0.02\}$ \\ $\slp \in \{0, 0.2\}$ \hspace{0.2cm} \end{tabular} \\ \hline
    \multirow{4}{*}{SoftAvoid} & \texttt{GridworldSmall} & 100 & \begin{tabular}[c]{@{}c@{}} \hspace{0.2cm} $\thr \in \{0, 0.15, 0.3, 0.45, 0.6, 0.75\}$ \\ $\trp \in \{0.2\}$ \\ $\slp \in \{0, 0.2\}$ \hspace{0.2cm} \end{tabular} \\ \cline{2-4} 
     & \texttt{GridworldLarge} & 200 & \begin{tabular}[c]{@{}c@{}} \hspace{0.2cm} $\thr \in \{0, 0.15, 0.3\}$ \\ $\trp \in \{0.02\}$ \\ $\slp \in \{0, 0.2\}$ \hspace{0.2cm} \end{tabular} \\ \hline
     Manhattan & \texttt{Manhattan} & 200 & \begin{tabular}[c]{@{}c@{}} \hspace{0.2cm} $\thr \in \{0, 0.15, 0.3, 0.45, 0.6\}$ \\ $radius \in \{0.2, 0.4\}$ km \\ $period \in \{50, 100\}$ \\ $delay \in \{10, 20\}$ \hspace{0.2cm} \end{tabular} \\ \hline
    \bottomrule
    \end{tabular}
    \caption{Configuration used for individual tasks.}
\label{tab:evaluation_configs}
\end{table}

\paragraph{Exploration Constant}
MCTS-based methods are particularly sensitive to the selection of their exploration
constant $\explconst$. In our preliminary experiments, we tried values in the range $[0.1, 20]$ with $C=5$ yielding the best performance for RAMCP and CC-POMCP, and making little difference for T-UCT.
Hence, all the experiments are performed with $C=5$.

\paragraph{Gridworld}
The Gridworld maps were generated using our randomized map generator to create a diverse dataset, avoiding any bias towards specific topologies that might favor particular algorithms.
The generator script is available in our project repository under \texttt{generator.py}, which also includes detailed descriptions of its parameters. The parameters utilized for the dataset generation are documented in the dataset files \texttt{HW\_SMALL.txt} and \texttt{HW\_LARGE.txt}.
The maps are represented in text format, where \texttt{B} denotes the initial tile, \texttt{G} indicates the goal tile, \texttt{\#} represents a wall, and \texttt{T}~signifies a trap.

The evaluated configurations with varying dataset, horizon $T$, $\thr$, $\trp$, and $\slp$ are summarized in Table~\ref{tab:evaluation_configs}.




\paragraph{Manhattan}
The Manhattan environment is based on the AEV benchmark first presented
in \cite{manhattan-paper}, the implementation of which is included in the Python package FiMDPEnv. \cite{fimdp}

The agent deterministically moves between junctions in Manhattan,
while incurring stochastic time delay based on real vehicle travel data from Uber. \cite{ubermovement} 
We augment the original Manhattan environment with targets that request periodic maintenance every $period$ time units. 
Whenever enough time has elapsed for a target to reach the $period$, and the target's location is less than $radius$ kilometers away from the agent, the environment transitions to a dummy decision state. In this state, the agent can accept one of the available
maintenance orders, or potentially decline any new jobs. In our experiments, we investigate several values of the
$radius$ parameter, resulting in varying amounts of orders
available to the agent during execution.

Once a maintenance order has been accepted, the agent must make the trip to its location on the map. For successfully reaching the target, he receives 1 unit of reward, but if the drive takes longer than $delay$ time units, he also incurs 0.1 units of penalty. 
The agent thus needs to carefully consider which orders to accept, in order not to violate the specified penalty threshold. 

The evaluated configurations with varying horizon $T$, $\thr$, $radius$, $period$, and $delay$ are summarized in Table~\ref{tab:evaluation_configs}.

\paragraph{Time Limits}
For RAMCP, it is not possible to simply limit the time budget for the MCTS phase since the LP phase can take non-negligible time.
It is, however, difficult to adjust the time limits by a simple rule, since the time spent in the LP phase depends on the problem topology and the size of the sampled search tree.
For this reason, we hand crafted the RAMCP time limits based on empirical observations so that its real time per step is comparable to times of T-UCT and CC-POMCP. See function \texttt{ramcp\_time\_correction} in \texttt{eval.py} for details.
See  Section \ref{app:additional-data} to see the overview of the time limits and the real time per step for each algorithm.

\paragraph{Execution}
The experiments were run on 5 machines with AMD Ryzen 9 3900X, 32GB of RAM, with Ubuntu 22.04.4 LTS. We used OR-Tools \cite{ortools} version 9.10 for the LP solvers and FiMDPEnv version 1.0.4. to build the Manhattan benchmark.
The jobs were distributed across the machines using Ray 2.31.0.
The total evaluation time was approx. 53 hours for the Gridworld tasks and approx. 75 hours for the Manhattan task.

\newpage
\subsection{Additional Data}\label{app:additional-data}

\newcolumntype{H}{>{\setbox0=\hbox\bgroup}c<{\egroup}@{}}
\begin{table}[!hb]
    \centering
\renewcommand{\arraystretch}{1.12}

\begin{tabular}{lllrrrr}
    \toprule
    agent & env & time limit &   real time & $SAT_M$ & $SAT_W$ & samples \\
    \midrule
    \multirow{15}{*}{CC-POMCP} & \multirow{3}{*}{Avoid L} & 10 & 10.14 & 0.73 & 0.92 & 583.86 \\
     &  & 25 & 25.35 & 0.76 & 0.93 & 1370.62 \\
     &  & 50 & 50.70 & 0.77 & 0.94 & 2633.91 \\
    \cline{2-7}
     & \multirow{3}{*}{Avoid S} & 5 & 5.15 & 0.58 & 0.63 & 500.29 \\
     &  & 10 & 10.21 & 0.58 & 0.64 & 954.47 \\
     &  & 25 & 25.40 & 0.59 & 0.64 & 2298.28 \\
    \cline{2-7}
     & \multirow{3}{*}{Manhattan} & 100 & 102.58 & 0.93 & 1.00 & 78.34 \\
     &  & 200 & 203.59 & 0.95 & 1.00 & 145.02 \\
     &  & 500 & 506.31 & 0.94 & 1.00 & 349.42 \\
    \cline{2-7}
     & \multirow{3}{*}{SoftAvoid L} & 10 & 10.12 & 0.90 & 1.00 & 560.45 \\
     &  & 25 & 25.26 & 0.90 & 1.00 & 1330.35 \\
     &  & 50 & 50.48 & 0.91 & 1.00 & 2587.02 \\
    \cline{2-7}
     & \multirow{3}{*}{SoftAvoid S} & 5 & 5.15 & 0.92 & 0.97 & 450.89 \\
     &  & 10 & 10.22 & 0.92 & 0.97 & 874.16 \\
     &  & 25 & 25.54 & 0.92 & 0.97 & 2117.45 \\
    \cline{1-7} \cline{2-7}
    \multirow{15}{*}{RAMCP} & \multirow{3}{*}{Avoid L} & 10 & 15.15 & 0.34 & 0.58 & 496.79 \\
     &  & 25 & 35.38 & 0.40 & 0.58 & 879.47 \\
     &  & 50 & 74.38 & 0.41 & 0.59 & 1377.64 \\
    \cline{2-7}
     & \multirow{3}{*}{Avoid S} & 5 & 12.93 & 0.63 & 0.73 & 1207.82 \\
     &  & 10 & 25.56 & 0.65 & 0.75 & 2579.67 \\
     &  & 25 & 57.99 & 0.69 & 0.79 & 7005.59 \\
    \cline{2-7}
     & \multirow{3}{*}{Manhattan} & 100 & 118.63 & 0.23 & 0.42 & 102.40 \\
     &  & 200 & 231.98 & 0.26 & 0.44 & 189.56 \\
     &  & 500 & 569.76 & 0.25 & 0.45 & 426.91 \\
    \cline{2-7}
     & \multirow{3}{*}{SoftAvoid L} & 10 & 15.05 & 0.54 & 0.67 & 518.45 \\
     &  & 25 & 34.62 & 0.55 & 0.67 & 922.17 \\
     &  & 50 & 71.65 & 0.52 & 0.67 & 1457.94 \\
    \cline{2-7}
     & \multirow{3}{*}{SoftAvoid S} & 5 & 11.97 & 0.66 & 0.83 & 813.39 \\
     &  & 10 & 25.48 & 0.68 & 0.87 & 1920.34 \\
     &  & 25 & 63.31 & 0.70 & 0.90 & 5811.67 \\
    \cline{1-7} \cline{2-7}
    \multirow{15}{*}{T-UCT (ours)} & \multirow{3}{*}{Avoid L} & 10 & 10.21 & 0.49 & 0.78 & 283.23 \\
     &  & 25 & 25.39 & 0.69 & 0.91 & 587.01 \\
     &  & 50 & 50.85 & 0.76 & 0.94 & 1016.85 \\
    \cline{2-7}
     & \multirow{3}{*}{Avoid S} & 5 & 5.11 & 0.70 & 0.83 & 181.05 \\
     &  & 10 & 10.16 & 0.75 & 0.86 & 324.23 \\
     &  & 25 & 25.30 & 0.78 & 0.87 & 704.96 \\
    \cline{2-7}
     & \multirow{3}{*}{Manhattan} & 100 & 113.64 & 0.72 & 0.97 & 93.04 \\
     &  & 200 & 223.63 & 0.72 & 0.99 & 173.84 \\
     &  & 500 & 546.64 & 0.75 & 1.00 & 386.88 \\
    \cline{2-7}
     & \multirow{3}{*}{SoftAvoid L} & 10 & 10.21 & 0.86 & 1.00 & 278.04 \\
     &  & 25 & 25.38 & 0.83 & 1.00 & 593.84 \\
     &  & 50 & 50.81 & 0.81 & 1.00 & 1041.50 \\
    \cline{2-7}
     & \multirow{3}{*}{SoftAvoid S} & 5 & 5.11 & 0.97 & 1.00 & 150.63 \\
     &  & 10 & 10.16 & 0.96 & 1.00 & 268.06 \\
     &  & 25 & 25.31 & 0.96 & 0.99 & 573.71 \\
    \cline{1-7} \cline{2-7}
    \bottomrule
    \end{tabular}
    
    \caption{The detailed results of the experiments summarized in Figure 
\ref{fig:results}. The column \emph{samples} shows the mean number of MCTS samples used per decision.}

\end{table}




\end{document}